\newcommand{\R}{\mathbb{R}}
\newcommand{\bfeps}{\boldsymbol \epsilon}
\newcommand{\bfx}{\boldsymbol x}
\newcommand{\bfX}{\boldsymbol X}
\newcommand{\bff}{\boldsymbol f}
\newcommand{\bfy}{\boldsymbol y}
\newcommand{\Dcal}{\mathcal{D}}
\newcommand{\Vcal}{\mathcal{V}}
\newcommand{\bfmu}{\boldsymbol \mu}
\newcommand{\bfu}{\boldsymbol u}
\newcommand{\bfv}{\boldsymbol v}
\newcommand{\bfw}{\boldsymbol w}
\newcommand{\bfz}{\boldsymbol z}
\newcommand{\bfA}{\boldsymbol A}
\newcommand{\bfB}{\boldsymbol B}
\newcommand{\bfC}{\boldsymbol C}
\newcommand{\bfD}{\boldsymbol D}
\newcommand{\bfE}{\boldsymbol E}
\newcommand{\bfF}{\boldsymbol F}
\newcommand{\bfG}{\boldsymbol G}
\newcommand{\bfH}{\boldsymbol H}
\newcommand{\bfO}{\boldsymbol O}
\newcommand{\bfR}{\boldsymbol R}
\newcommand{\bfU}{\boldsymbol U}
\newcommand{\bfV}{\boldsymbol V}
\newcommand{\bfQ}{\boldsymbol Q}
\newcommand{\bfW}{\boldsymbol W}
\newcommand{\bfY}{\boldsymbol Y}
\newcommand{\bfZ}{\boldsymbol Z}
\newcommand{\PartialObs}{\boldsymbol{P}_o}
\newcommand{\PartialObsPerp}{\PartialObs^{\perp}}
\newcommand{\basisPerp}{\bfV^{\perp}}
\newcommand{\Qperp}{\bfQ^{\perp}}
\newcommand{\nr}{n}
\newcommand{\Nreproj}{N_{r}}
\newcommand{\StageDiscrepState}{\boldsymbol \Delta_{\bfz}}
\newcommand{\StageDiscrepOut}{\boldsymbol \Delta_{\bfy}}
\newcommand{\hbfA}{\hat{\bfA}}
\newcommand{\hbfB}{\hat{\bfB}}
\newcommand{\hbfE}{\hat{\bfE}}
\newcommand{\hbfF}{\hat{\bfF}}
\newcommand{\hbfG}{\hat{\bfG}}
\newcommand{\hbfH}{\hat{\bfH}}
\newcommand{\rbfx}{\bar{\bfx}}
\newcommand{\rbfz}{\bar{\bfz}}
\newcommand{\tbfA}{\tilde{\bfA}}
\newcommand{\tbfB}{\tilde{\bfB}}
\newcommand{\tbfC}{\tilde{\bfC}}
\newcommand{\tbfx}{\tilde{\bfx}}
\newcommand{\rbfX}{\bar{\bfX}}
\newcommand{\hbfO}{\hat{\bfO}}
\newcommand{\bbfO}{\breve{\bfO}}
\newcommand{\tbfy}{\tilde{\bfy}}
\newcommand{\tbfz}{\tilde{\bfz}}
\newcommand{\hbfz}{\hat{\bfz}}
\newcommand{\tbfY}{\tilde{\bfY}}
\newcommand{\hbfD}{\hat{\bfD}}
\newcommand{\bbfD}{\breve{\bfD}}
\newcommand{\tbfZ}{\tilde{\bfZ}}
\newcommand{\bbfE}{\breve{\bfE}}
\newcommand{\bbfF}{\breve{\bfF}}
\newcommand{\bbfG}{\breve{\bfG}}
\newcommand{\bbfH}{\breve{\bfH}}
\newcommand{\Treproj}{K_{r}}
\newcommand{\paramSpace}{\mathcal{D}}
\newcommand{\param}{\boldsymbol{\mu}}
\newcommand{\errOut}{\bfeps^{\text{output}}}
\newcommand{\errStateProj}{\bfeps_{\bfz}^{\text{proj}}}
\newcommand{\errStateStage}{\bfeps_{\bfz}^{\text{stage}}}
\newcommand{\errStateBatch}{\bfeps_{\bfz}^{\text{batch}}}
\newcommand{\closeState}{\bfE}
\newcommand{\closeSignal}{\bfF}
\newcommand{\closeOState}{\bfG}
\newcommand{\closeOSignal}{\bfH}
\newcommand{\kibitz}[2]{\ifnum\Comments=1\textcolor{#1}{#2}\fi}
\newenvironment{keywords}%
   {\begin{trivlist}\item[]{\bfseries\sffamily Keywords:}\ }% oder "Keywords:"
   {\end{trivlist}}
\newtheorem{theorem}{Theorem}
\newtheorem{proposition}[theorem]{Proposition}
\newtheorem{remark}[theorem]{Remark}
\numberwithin{equation}{section}
\title{Operator inference of non-Markovian terms for learning reduced models from partially observed state trajectories}
\author{Wayne Isaac Tan Uy and Benjamin Peherstorfer\thanks{\{wayne.uy,pehersto\}@cims.nyu.edu, Courant Institute of Mathematical Sciences, New York University, New York, NY 10012}}
\begin{document}

\maketitle

\begin{abstract}
This work introduces a non-intrusive model reduction approach for learning reduced models from partially observed state trajectories of high-dimensional dynamical systems. The proposed approach compensates for the loss of information due to the partially observed states by constructing non-Markovian reduced models that make future-state predictions based on a history of reduced states, in contrast to traditional Markovian reduced models that rely on the current reduced state alone to predict the next state. The core contributions of this work are a data sampling scheme to sample partially observed states from high-dimensional dynamical systems and a formulation of a regression problem to fit the non-Markovian reduced terms to the sampled states. Under certain conditions, the proposed approach recovers from data the very same non-Markovian terms that one obtains with intrusive methods that require the governing equations and discrete operators of the high-dimensional dynamical system. Numerical results demonstrate that the proposed approach leads to non-Markovian reduced models that are predictive far beyond the training regime. Additionally, in the numerical experiments, the proposed approach learns non-Markovian reduced models from trajectories with only 20\% observed state components that are about as accurate as traditional Markovian reduced models fitted to trajectories with 99\% observed components. 

\end{abstract}

\begin{keywords}scientific machine learning, non-intrusive model reduction, dynamical systems, non-Markovian reduced models, operator inference, partial observations\end{keywords}

\section{Introduction}
There is an increasing interest in methods to learn reduced models of high-dimensional dynamical systems from data. Such non-intrusive---scientific-machine-learning---model reduction approaches are applicable even if little is known about the governing equations and instead mostly data are available. This differs from traditional intrusive model reduction that is typically a rather manual process requiring extensive expertise and full knowledge of the high-dimensional governing equations and their discretizations \cite{RozzaPateraSurvey,paper:BennerGW2015,Quarteroni2011,book:HesthavenRS2016,doi:10.1080/00207170410001713448}. This work proposes a non-intrusive model reduction method to learn reduced models from partially observed state trajectories, rather than fully observed states, of high-dimensional systems.
To compensate for the information lost due to the partially observed states, the proposed reduced models make future-state predictions based on a time history of the reduced states and thus describe non-Markovian dynamics \cite{Givon_2004,chorin2006,doi:10.1098/rspa.2014.0446}. Notice that many traditional reduced models from non-/intrusive model reduction are Markovian and thus make future-state predictions based on the current reduced state alone \cite{RozzaPateraSurvey,paper:BennerGW2015,Quarteroni2011,book:HesthavenRS2016,doi:10.1080/00207170410001713448}. The two core contributions of this work are a data sampling scheme and a formulation of a regression problem to learn the non-Markovian terms of the proposed reduced models from partially observed state trajectories.  
Under certain conditions, in particular if the high-dimensional system dynamics are linear, the proposed data sampling scheme together with the proposed regression problem guarantees recovery of the very same non-Markovian reduced models that one would obtain with intrusive projection-based methods; note that the latter require the high-dimensional system operators in either assembled form or implicitly via methods that provide the action of the high-dimensional operators on vectors in contrast to the proposed approach.
If the high-dimensional system dynamics have nonlinear state dependencies, then the proposed non-Markovian reduced models are approximations of the models obtained with intrusive methods.

There is a growing body of literature on learning reduced models from data. If frequency-response or impulse-response measurements are available, then widely used approaches for learning dynamical-system models include the Loewner approach \cite{ANTOULAS01011986,5356286,Mayo2007634,BeaG12,paper:AntoulasGI2016,paper:GoseaA2018,paper:IonitaA2014}, vector fitting \cite{772353,paper:DrmacGB2015}, and eigensystem realization \cite{doi:10.2514/3.20031,KramerERZ}.
With respect to learning from time-domain state trajectories, dynamic mode decomposition (DMD) \cite{SchmidDMD,FLM:7843190,FLM:6837872,Tu2014391,NathanBook,Williams2015} has been widely adopted which fits linear dynamical systems in the $L_2$ norm. Tools from sparse regression and compressive sensing have been applied in
\cite{Schaeffer6634,paper:BruntonPK2016,Schaeffer2018} to extract sparse representations of governing equations from data. These methods focus on learning the high-dimensional dynamical system while our objective is to learn reduced models that reduce the computational costs of many-query applications. One approach that explicitly targets reduced models is operator inference \cite{paper:PeherstorferW2016}, which learns reduced models of nonlinear dynamical systems from projected state trajectories; see also \cite{paper:BennerGKPW2020,mcquarrie2020datadriven,doi:10.2514/1.J058943,paper:Peherstorfer2019,pehersto15dynamic}. Together with the re-projection data sampling scheme, which alternates between querying the full system and performing projection, operator inference guarantees recovery of the very same reduced models that would be obtained with intrusive model reduction for nonlinear polynomial systems \cite{paper:Peherstorfer2019}. There are  also  \emph{a posteriori} error estimators available for models learned with operator inference from data of high-dimensional linear dynamical systems \cite{uy2020probabilistic}. The work on Lift \& Learn introduced in \cite{QIAN2020132401} proposes a non-intrusive model reduction approach that is applicable to dynamical systems with nonlinear terms beyond polynomials. 

This work builds on operator inference to learn non-Markovian reduced models from partially observed states. The re-projection scheme \cite{paper:Peherstorfer2019} is extended to generate data for learning non-Markovian operators. There are other methods for learning non-Markovian models from data. Closest to our work is the use of time-delay coordinates in \cite{LeClainche2017,Champion2019} to generalize DMD  to non-Markovian models if under-resolved and incomplete dynamics are observed; however, our focus is on learning polynomial systems rather than DMD models and to recover the same non-Markovian dynamics that one would obtain with intrusive projection approaches in certain situations. There are also model reduction methods for fitting reduced models with delay to data sampled from high-dimensional dynamical systems that include a time delay term \cite{BEATTIE2009225,SCHULZE2016125,SCHULZE2018250}. In contrast, the high-dimensional systems that we consider do not necessarily have time-delay terms but rather we introduce non-Markovian (time-delay) terms in our reduced models to compensate for partial state information. Widely used non-Markovian models in nonlinear system identification  \cite{billings2013nonlinear} and time series analysis \cite{shumway2011time} are based on the nonlinear autoregressive moving average model with exogenous inputs (NARMAX) and its linear counterparts such as the  autoregressive moving average model (ARMA) and the autoregressive moving average model with exogenous inputs (ARMAX). Hidden Markov models \cite{zucchini2016hidden} seek to quantify the uncertainty on the unobserved Markovian states given measurements on observed states which are non-Markovian. Non-Markovian dynamics also play an important role in molecular dynamics simulations \cite{Wang2016,Wang2018,Surez2016}. 
Another area of model reduction and data-fit modeling where non-Markovian models have been investigated is in the fluids and reduced modeling community. There, closure modeling aims to capture the discrepancy in the low-dimensional approximation due to truncation; see for example large eddy simulation \cite{sagaut2006large} and closure models for reduced systems obtained via proper orthogonal decomposition \cite{Wang2012}. More recently,  data-driven closure for reduced models \cite{paper:PanK2018,paper:WangRH2020,Wan2018,Maulik2020,Mou2020,doi:10.1137/17M1145136} have been proposed, many of which are based on time-delay coordinates and the Mori-Zwanzig formalism; note that the works \cite{paper:PanK2018,paper:WangRH2020,Wan2018} consider intrusive settings where the full system and/or its discretization are used to construct the closure model. 
In contrast, the goal of this work is learning models from data, rather than deriving a closure. Another key distinction is that we derive non-Markovian models explicitly to compensate for the limited information due to the partially observed state trajectories, rather than closing reduced models and compensating for other data limitations.

This manuscript is organized as follows. Section~\ref{sec:preliminaries} discusses preliminaries on intrusive model reduction for nonlinear polynomial systems and reviews non-intrusive model reduction based on operator inference and re-projection \cite{paper:PeherstorferW2016,paper:Peherstorfer2019}. A formulation of the problem is presented and an example is discussed that highlights the challenges of constructing reduced models from partially observed states. Section~\ref{sec:LearnNonMarkovian} introduces the proposed approach to learn the non-Markovian terms from data. The proposed reduced models describe the dynamics of the observed states and are learned via a novel extended re-projection algorithm and linear regression that can be implemeted with off-the-shelf linear algebra packages. The numerical experiments in Section~\ref{sec:NumExp} show that the proposed non-intrusive approach and the resulting non-Markovian reduced models achieve approximations which are orders of magnitude more accurate than models obtained by directly applying operator inference to partially observed state trajectories.

\section{Preliminaries} \label{sec:preliminaries}

We summarize fundamental concepts of intrusive model reduction in Section~\ref{subsec:TradMOR}. Section~\ref{subsec:OpInfReproj} then discusses non-intrusive model reduction from data via operator inference and re-projection. Section~\ref{subsec:ProbStatement} formulates the problem of learning reduced models from partially observed states. 

\subsection{Intrusive (traditional) model reduction} \label{subsec:TradMOR}

Let $\param \in \paramSpace \subseteq \R^d$ be a parameter in the parameter domain $\paramSpace$ and consider the full model that is a dynamical system with nonlinear polynomial terms up to degree $\ell \in \mathbb{N}$
    \begin{align} \label{eq:FOM}
        \bfx_{k+1}(\param) & = \bff(\bfx_k(\param),\bfu_k(\param)) = \sum_{j=1}^{\ell} \bfA_j(\param) \bfx_k^j(\param) + \bfB(\param) \bfu_k(\param)\,,\quad k =0,\dots,K-1, \\
        \bfy_{k+1}(\param) & = \bfC(\param) \bfx_{k+1}(\param)\,, \notag
    \end{align}
where $k \in \mathbb{N}$ is the time step, $\bfx_k(\param) \in \R^N$ is the state, $\bfu_k(\param) \in \R^p$ is the input, $\bfy_{k+1}(\param) \in \R^s$ is the output,  $\bfB(\param) \in \R^{N \times p}, \, \bfC(\param) \in \R^{s \times N}$, $\bfA_j(\param) \in \R^{N \times N_j}$ are the system matrices with $N_j = \binom{N+j-1}{j}$  for $j=1,\dots,\ell$
and $K \in \mathbb{N}$ with $K \ge 1$. The nonlinear terms in \eqref{eq:FOM} depend on vectors $\bfx_k^j(\param) \in \R^{N_j}$ for $j \ge 2$ which are defined by retaining only the components of the Kronecker product $$\underbrace{\bfx_k (\param) \otimes \cdots \otimes \bfx_k (\param)}_{j}$$ whose factors are unique up to permutation \cite{paper:PeherstorferW2016}. 

To obtain a reduced model with state  dimension $n \ll N$ from \eqref{eq:FOM},  a projection-based approach via the proper orthogonal decomposition (POD) is performed as follows. For the parameter values $\param_1,\dots, \param_m \in \paramSpace$ with their corresponding state trajectories $\bfX(\param_i) = [\bfx_0(\param_i),\dots,\bfx_{K-1} (\param_i)] \in \R^{N \times K}$ for $i=1,\dots,m$, the first $n$ left singular vectors of $[\bfX(\param_1),\dots,\bfX(\param_m)] \in \R^{N \times mK }$ are computed to form the reduced basis matrix $\bfV_{\bfx} = [\bfv_1,\dots,\bfv_n]\in \R^{N\times n}$. A solution to \eqref{eq:FOM} is sought in the subspace $\mathcal{V}_{\bfx}$ spanned by the columns of $\bfV_{\bfx}$ to obtain the approximation $\bfV_{\bfx} \tbfx_k(\param)$ of $\bfx_k(\bfmu)$ where $\tbfx_k \in \R^n$ is the state of the reduced model.

Applying Galerkin projection onto $\mathcal{V}_{\bfx}$, the reduced model is
\begin{align} \label{eq:ROM}
    \tbfx_{k+1}(\param) & = \sum_{j=1}^{\ell} \tbfA_j (\param) \tbfx_k^j (\param) + \tbfB(\param) \bfu_k (\param),  \quad k = 0,\dots,K-1, \\
    \tbfy_{k+1}(\param) & = \tbfC(\param) \tbfx_{k+1} (\param)\,, \notag
\end{align}
for $\param \in \{\param_1,\dots,\param_m\}$ where $\tbfB(\param) = \bfV_{\bfx}^T \bfB(\param) \in \R^{n \times p}$, $\tbfC(\param) = \bfC(\param) \bfV_{\bfx} \in \R^{s \times n}$, $\tbfA_1(\param) = \bfV_{\bfx}^T \bfA_1(\param) \bfV_{\bfx} \in \R^{n \times n}$. The reduced operators $\tbfA_j(\param) \in \R^{n \times n_j}$ with $n_j = \binom{n+j-1}{j}$ for $j \ge 2$ can be derived in a similar manner; see, e.g., \cite{paper:Peherstorfer2019}. The process of constructing the reduced operators via matrix-matrix multiplications of the basis matrix $\bfV_{\bfx}$ and the full model operators is intrusive in the sense that the full model operators are required either in assembled form or implicitly through a routine that provides the action of the full model operators on a vector. To derive the reduced model for parameters $\param \in \paramSpace \setminus \{\param_1,\dots,\param_m\}$, the operators of the  system \eqref{eq:ROM} for $\param \in \{\param_1,\dots,\param_m\}$ are interpolated element-wise \cite{paper:DegrooteVW2009}.

\subsection{Non-intrusive model reduction with operator inference and re-projection} \label{subsec:OpInfReproj}
\begin{algorithm}[t]
\caption{Data sampling with re-projection}
\begin{algorithmic}[1]
  \STATE Set $\rbfx_0(\param) = \bfV_{\bfx}^T\bfx_0 (\param)$ % Let $\bfw_0 \in \text{span}\bfV_{\nr}$ and 
  \FOR{$k = 0, \dots, K-1$}
        \STATE Query \eqref{eq:FOM} for a single time step to obtain $\bfx_{\text{tmp}} = \sum_{j=1}^{\ell} \bfA_j(\param)(\bfV_{\bfx} \rbfx_k(\param))^j + \bfB(\param) \bfu_k(\param)$
        \STATE Set $\rbfx_{k+1}(\param) = \bfV_{\bfx}^T \bfx_{\text{tmp}}$
  \ENDFOR
  \STATE Return $[\rbfx_0(\param), \dots, \rbfx_{K}(\param)]$
\end{algorithmic}
\label{alg:reprojection}
\end{algorithm}
Operator inference \cite{paper:PeherstorferW2016} is a non-intrusive model reduction approach that learns operators of a low-dimensional dynamical-system model from data without requiring the full-model operators \eqref{eq:FOM} in assembled or implicit form. Together with the data sampling  via re-projection \cite{paper:Peherstorfer2019}, operator inference can exactly recover the reduced operators $\tbfA(\param)_1,\dots,\tbfA_{\ell}(\param), \tbfB(\param), \tbfC(\param)$. 

In data sampling with re-projection, for each $\param \in \{\param_1,\dots,\param_m\}$, the full system \eqref{eq:FOM} is queried at an initial condition  $\bfx_0(\param) \in \mathcal{V}_{\bfx}$ and inputs $\bfu_0(\param),\dots,\bfu_{K-1}(\param)$  for a single time step followed by a projection step to derive the re-projected trajectories $\rbfx_0(\param),\dots,\rbfx_K(\param) \in \mathbb{R}^n$; see  Algorithm~\ref{alg:reprojection}. 
Let $\rbfX^{j}(\param) = [\rbfx_0^{j}(\param),\dots,\rbfx_{K-1}^{j}(\param)] \in \R^{n_j \times K}$ for $j=1,\dots,\ell$, $\bfW(\param) = [\rbfx_1(\param),\dots,\rbfx_K(\param)] \in \R^{n \times K}$, and $\bfU(\param) = [\bfu_0(\param),\dots,\bfu_{K-1}(\param)] \in \R^{p \times K}$. For each $\param \in \{\param_1,\dots,\param_m\}$, the least squares problem
\begin{align} \label{eq:MarkovianLS}
    \min_{\hbfO(\param)} \|\bfD(\param)^T \hbfO(\param)^T - \bfW(\param)^T\|_F^2
\end{align}
with the data matrix 
\begin{align*}
    \bfD(\param) = \begin{bmatrix} \rbfX^{1}(\param) \\
    \vdots \\ \rbfX^{\ell}(\param) \\ \bfU(\param)
    \end{bmatrix} \in \R^{(\sum_{j=1}^{\ell} n_j + p) \times K}
\end{align*}
is then solved for 
\begin{align*}
   \hbfO(\param) = \begin{bmatrix}
    \hbfA_1(\param) & \dots & \hbfA_{\ell}(\param) & \hbfB(\param)
   \end{bmatrix} \in \R^{n \times (\sum_{j=1}^{\ell} n_j + p)}.
\end{align*}
Under mild conditions on $\bfD(\param)$, it was shown in \cite{paper:Peherstorfer2019} that the unique optimal solution to \eqref{eq:MarkovianLS} is attained at $\hbfA_j(\param) = \tbfA_j(\param)$ for $j=1,\dots,\ell$ and $\hbfB(\param) = \tbfB(\param)$ which are exactly the reduced operators from intrusive model reduction as discussed in Section~\ref{subsec:TradMOR}.

\subsection{Problem formulation} \label{subsec:ProbStatement}
\begin{figure}
    \centering
    \includegraphics[width=0.9\columnwidth]{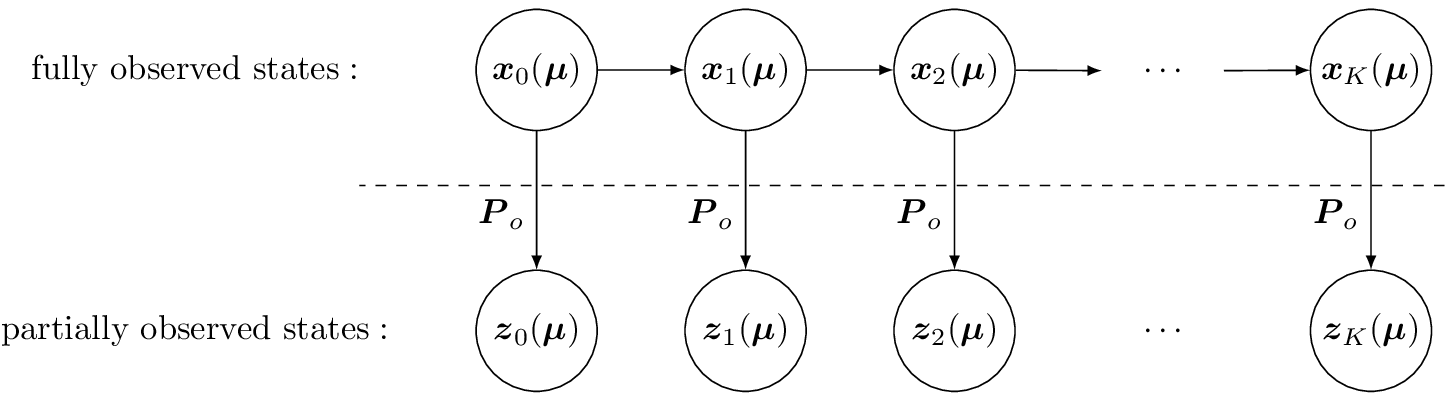}
\caption{The goal of this work is to learn reduced models from observations $\bfz_k(\param) = \PartialObs \bfx_k(\param)$ which represent a few components of the full-model states $\bfx_k(\param)$.}
\label{fig:PartialObs}
\end{figure}

Consider now the situation where only partially observed state trajectories $\bfZ(\bfmu)$ are available, rather than fully observed state trajectories $\bfX(\bfmu)$ of the full model. Formally, define $\PartialObs \in \{0, 1\}^{r \times N}$ the selection matrix whose $r$ rows are a subset of the rows of the $N \times N$ identity matrix. We consider the setting in which the system \eqref{eq:FOM} can be simulated at various initial conditions and inputs to generate observation trajectories  $\bfZ(\bfmu) = [\bfz_0(\bfmu), \dots, \bfz_K(\bfmu)]$ and outputs $\bfy_0(\bfmu), \dots, \bfy_K(\bfmu)$ that are related to the (unobserved) state trajectories $\bfX(\bfmu)$ of the full model \eqref{eq:FOM} via
    \begin{align} \label{eq:PartialObsFOM}
        \bfz_{k}(\param) & = \PartialObs \bfx_{k}(\param)\,,\qquad k = 0,\dots,K\,,
    \end{align}
see Figure~\ref{fig:PartialObs}. Implicit and explicit availability of the matrix $\PartialObs$ is not required in the following.

The aim is now to derive a low-dimensional model with operator inference that can predict the observations \eqref{eq:PartialObsFOM} of the full model at new parameter values in $\Dcal$ and new inputs. Directly applying operator inference and re-projection (Section~\ref{subsec:OpInfReproj}) to trajectories of observations $\bfZ(\bfmu)$ can lead to models that poorly approximate the dynamics of the full model. To see this, consider a POD basis matrix $\bfV$ with $n$ columns derived from the snapshot matrix of observations $\bfZ = [\bfZ(\bfmu_1), \dots, \bfZ(\bfmu_m)]$. Then, the projection from the fully observed, high-dimensional state $\bfx_k(\bfmu)$ to the reduced observation $\tbfz_k(\param) \in \R^n, n < r,$ is given by $(\PartialObs^T\bfV)(\PartialObs^T\bfV)^T$. If a reduced model of the form \eqref{eq:ROM} is learned from trajectories of observations, then the reduced states are in the subspace spanned by columns of $\PartialObs$; however, in contrast to, e.g., the POD basis $\bfV_x$ of the fully observed states, the columns of $\PartialObs^T$ are canonical unit vectors that form a basis of a subspace that typically offers an inadequate approximation quality of the full model states.
    
As an illustration, consider the nonlinear polynomial full model of degree $\ell = 3$ stemming from the spatio-temporal discretization of the parameter-independent Chafee-Infante equation; details of this numerical experiment are presented in Section~\ref{subsec:Chafee}. First, consider a reduced model constructed with operator inference and re-projection as discussed in Section~\ref{subsec:OpInfReproj} when all components of the states can be observed. Figure~\ref{fig:ChafeeProbStatementLeft} shows that with $n = 10$ dimensions, the reduced model output approximates the full model output well. In contrast, learning a reduced model with the non-intrusive model reduction process described in Section~\ref{subsec:OpInfReproj}  with observations that contain only 60\% of the components of the full model states, rather than all components, leads to the results in  Figure~\ref{fig:ChafeeProbStatementRight}. The reduced model trajectories plotted for the output are computed using a test input; cf.~Section~\ref{subsec:Chafee} for details. The model learned from the partially observed states provides a poor approximation of the output of the full model.

\begin{figure}

  \begin{subfigure}[b]{0.45\textwidth}
    \begin{center}
{{\Large\resizebox{1.15\columnwidth}{!}{\input{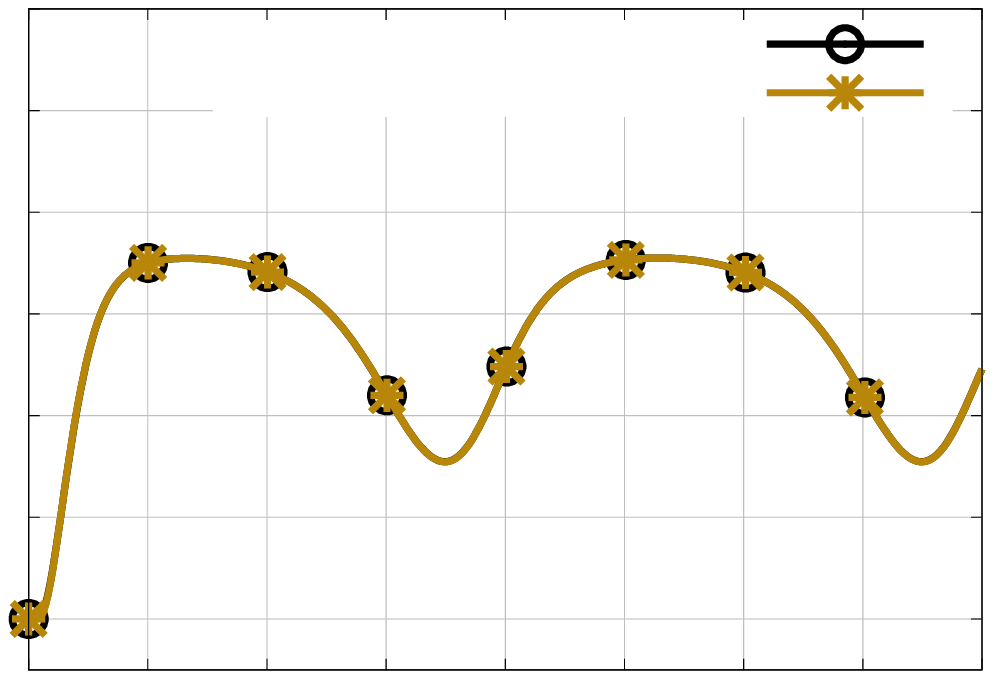}}}}
\end{center}
    \caption{fully observed state}
    \label{fig:ChafeeProbStatementLeft}
  \end{subfigure} 
  \quad \quad \quad
   \begin{subfigure}[b]{0.45\textwidth}
    \begin{center}
{{\Large\resizebox{1.15\columnwidth}{!}{\input{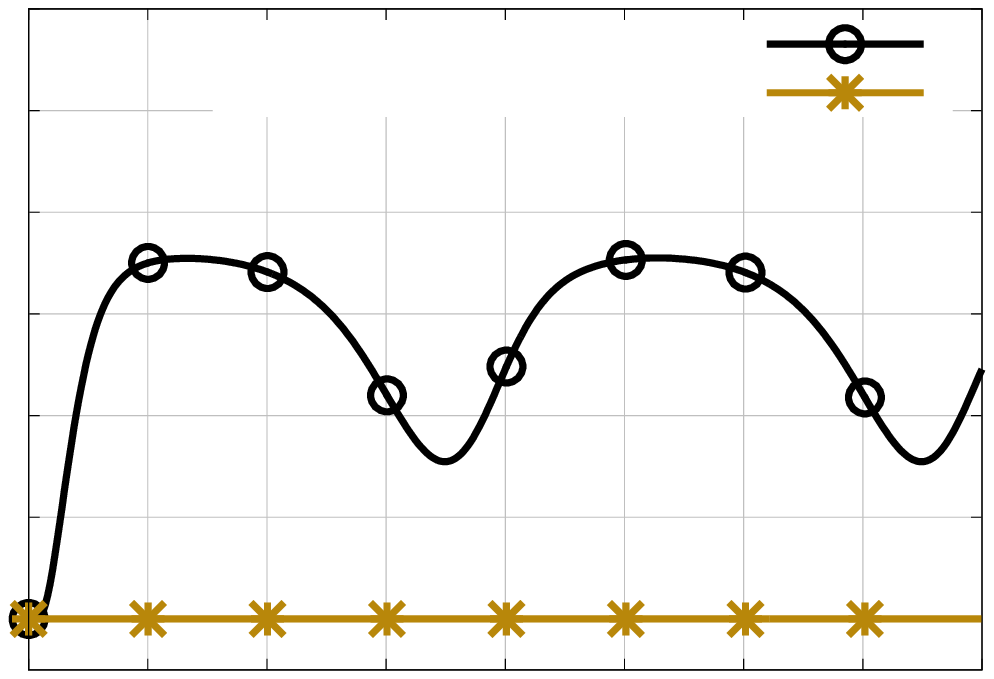}}}}
\end{center}
    \caption{60\% observed state components}
    \label{fig:ChafeeProbStatementRight}
  \end{subfigure} 
  \caption{Chafee-Infante: The left panel shows that a reduced model learned with operator inference and re-projection (Section~\ref{subsec:OpInfReproj}) from fully observed state trajectories approximates well the output of the full model. However, applying operator inference when only 60\% of the state components (observations) are available, rather than all state components, leads to a reduced model that fails to predict the output of the full model in this example, as shown in the right panel.}
  \label{fig:ChafeeProbStatement}
\end{figure}

\section{Learning non-Markovian reduced models with operator inference} \label{sec:LearnNonMarkovian}

To compensate for the loss of information of learning from partially observed state trajectories, we learn non-Markovian terms that take into account the history (memory) of reduced states at previous time steps to correct reduced models; cf.~Mori-Zwanzig formalism \cite{paper:ChorinHK2002,Chorin2009,paper:LinL2021} and neural-network architectures in machine learning such as the long short-term memory network (LSTM) \cite{Hochreiter1997}. This is in stark contrast to traditional, Markovian reduced models of the form \eqref{eq:ROM} where the reduced state at the current time step only depends on a single reduced state at the previous time step. 

Sections~\ref{subsec:Linear} and~\ref{subsec:Nonlinear}
propose a parametrization of reduced models with non-Markovian terms for linear and  nonlinear polynomial full models. A computational procedure to learn the reduced model operators of the non-Markovian terms is introduced in Section~\ref{subsec:OpInfClosure}, which discusses an extension of the re-projection algorithm as well as two modes in which operator inference can be applied.

For ease of notation, we drop the parametric dependence on the states, inputs, and operators because the non-Markovian terms described below are learned for each $\param \in \{\param_1,\dots,\param_m\}$ separately, analogous to the Markovian operators as discussed in Section~\ref{subsec:TradMOR}.
    
\subsection{Reduced systems for partially observed states with linear full dynamics} \label{subsec:Linear}

We first address the setting in which the full model \eqref{eq:FOM} is linear in the state variables. We have available observation trajectories $\bfZ = [\bfz_0, \dots, \bfz_{K}]$ that contain selected components of the state trajectory $\bfX = [\bfx_0, \dots, \bfx_{K}]$ as defined in \eqref{eq:PartialObsFOM}.  Let $\bfV \in \R^{r \times n}$ be the reduced basis matrix obtained via POD from snapshots of observations and $\Vcal$ be the subspace spanned by its columns; cf~Section~\ref{subsec:ProbStatement}. The goal is to derive a model that describes exactly the dynamics of the projected observations $\tbfz_k = \bfV^T \bfz_k \in \R^n$ and the output $\bfy_k = \bfC \bfx_k$. Note that the projected observations $\tbfz_k$ are obtained by first projecting the state $\bfx_k$ according to $\PartialObs$ and then projecting the observation $\bfz_k$ onto the space spanned by the columns of $\bfV$. The output can also be viewed as a projection from $\mathbb{R}^N$ to $\mathbb{R}^s$ if the output dimension $s$ satisfies $s < N$. 

\subsubsection{Dynamics of the projected observations} \label{subsubsec:LinearStateClosure}

Define $\PartialObsPerp \in \R^{(N-r) \times N}$ to be the matrix that extracts the components of $\bfx_k$ which are missing from $\bfz_k$. If $\boldsymbol{0}_{m \times n}$ is an $m \times n$ matrix of zeros, $\PartialObsPerp$ satisfies the relationship $\PartialObs (\PartialObsPerp)^T = \boldsymbol{0}_{r \times N-r}$ such that the columns of $\PartialObs$ and $\PartialObsPerp$ are orthonormal. We can therefore express $\bfx_k$ as a direct sum via
\begin{align} \label{eq:FullStateDirSum}
        \bfx_k = \PartialObs^T \bfz_k+ (\PartialObsPerp)^T \bfz_k^{\perp}\,,\qquad k = 0, \dots, K\,,
\end{align}
for $\bfz_k^{\perp} \in \R^{N-r}$. In addition, denote by $\basisPerp \in \R^{r \times (r-n)}$ the matrix whose columns form an orthonormal basis for the orthogonal complement of $\mathcal{V}$. If $\hbfz_k \in \R^{r-n}$, $\bfz_k$ can also be expressed as a direct sum 
\begin{align}\label{eq:PartialStateDirSum}
        \bfz_k = \bfV \tbfz_k + \basisPerp \hbfz_k\,,\qquad k = 0, \dots, K\,.
\end{align}
Substituting \eqref{eq:PartialStateDirSum} into \eqref{eq:FullStateDirSum}, $\bfx_k$ thus admits the direct sum decomposition
\begin{align} \label{eq:ROMDirSum}
        \bfx_k = \bfQ \tbfz_k + \Qperp \bfw_k\,,\qquad k = 0, \dots, K\,,
\end{align}
where $\bfQ = \PartialObs^T \bfV \in \R^{N \times n}$, $\Qperp = [ \PartialObs^T \bfV^{\perp} \,\, (\PartialObsPerp)^T] \in \R^{N \times (N-n)}$, and $\bfw_k = [\hbfz_k^T \,\, (\bfz_k^{\perp})^T]^T$. It can be verified that the columns of the matrices $\bfQ,\Qperp$ are orthonormal and that $\bfQ^T \Qperp = \boldsymbol{0}_{n \times N-n}$.

A system of equations of the time evolution of the projected states $\tbfz_k$ and the orthogonal complement $\bfw_k$ can be obtained by substituting \eqref{eq:ROMDirSum} to \eqref{eq:FOM} and pre-multiplying the resulting expression by $\bfQ^T$ or $(\Qperp)^T$, thereby giving the system
\begin{align} 
\label{eq:ResSys}
\tbfz_{k+1} & = \bfQ^T \bfA_1\bfQ \tbfz_k + \bfQ^T \bfA_1 \Qperp \bfw_k + \bfQ^T \bfB \bfu_k\,,\qquad k = 0, \dots, K-1\,, \\
\bfw_{k+1} & = (\Qperp)^T \bfA_1 \bfQ \tbfz_k + (\Qperp)^T \bfA_1 \Qperp \bfw_k + (\Qperp)^T\bfB \bfu_k\,,\qquad k = 0, \dots, K-1\,. \label{eq:UnResSys}
\end{align}

To extract a dynamical system in terms of $\tbfz_k$ only, we choose the initial condition $\bfx_0$ such that $\bfw_0 = (\Qperp)^T \bfx_0 = \boldsymbol{0}_{N-n}$. We view $\tbfz_k$ as a constant in \eqref{eq:UnResSys}, solve for $\bfw_k$ as
\begin{align} \label{eq:UnResSoln}
        \bfw_k = \sum_{l=0}^{k-1} ((\Qperp)^T \bfA_1 \Qperp)^{k-l-1}( (\Qperp)^T \bfA_1 \bfQ \tbfz_l + (\Qperp)^T\bfB \bfu_l) ,
\end{align}
and combine \eqref{eq:UnResSoln} with \eqref{eq:ResSys} to deduce
\begin{align} \label{eq:ResSoln}
        \tbfz_{k+1} = \underbrace{\tbfA_1 \tbfz_k + \tbfB \bfu_k}_{\substack{\text{Markovian}\\\text{term}}} +  \underbrace{\sum_{l=0}^{k-1} (\closeState_{k-l} \tbfz_l + \closeSignal_{k-l} \bfu_l)}_{\text{non-Markovian term}}\,,\qquad k = 0, \dots, K-1\,,
\end{align}
where $\tbfA_1 = \bfQ^T \bfA_1 \bfQ$ and $\tbfB = \bfQ^T \bfB$ are the operators for the Markovian term while the operators
\begin{align} \label{eq:CloseOpStateTrue}
        \closeState_{k-l} & = \bfQ^T \bfA_1 \Qperp ((\Qperp)^T \bfA_1 \Qperp)^{k-l-1} (\Qperp)^T \bfA_1 \bfQ  \in \R^{n \times n}\,,  \\
        \closeSignal_{k-l} & = \bfQ^T \bfA_1 \Qperp ((\Qperp)^T \bfA_1 \Qperp)^{k-l-1} (\Qperp)^T \bfB \in \R^{n \times p}\,, \notag
\end{align}
for  $l = 0,\dots,k-1$ give rise to the non-Markovian term. Deriving the dynamical system \eqref{eq:ResSoln} is analogous to what the Mori-Zwanzig formalism suggests \cite{paper:ChorinHK2002,Chorin2009,paper:LinL2021} for linear systems with inputs. Observe that \eqref{eq:ResSoln} shows the dependence of the projected observation $\tbfz_{k + 1}$ at time $k + 1$ on all previous projected observations and inputs. It is comprised of the Markovian term $\tbfA_1 \tbfz_k + \tbfB \bfu_k$ and the non-Markovian term that introduces dependence on time steps before $k$. 

\begin{remark}
The condition that $(\Qperp)^T \bfx_0 = \boldsymbol{0}_{N-n}$ is met, for instance, if $\bfx_0 = \PartialObs^T \bfV \bfV^T \bfz_0$ where $\bfz_0$ is the initial state in the available observation trajectory $\bfZ$. If this is not the case, the dynamical system \eqref{eq:ResSoln} for $\tbfz_k$ becomes
\begin{align} \label{eq:ResSolnICnonZero}
    \tbfz_{k+1} = \tbfA_1 \tbfz_k + \tbfB \bfu_k +  \sum_{l=0}^{k-1} (\closeState_{k-l} \tbfz_l + \closeSignal_{k-l} \bfu_l) + \boldsymbol{\Psi}_k \bfw_0\,,\qquad k = 0, \dots, K-1\,,
\end{align}
where $\boldsymbol{\Psi}_k = (\bfQ^T \bfA_1 \Qperp) ((\Qperp)^T \bfA_1 \Qperp)^k \in \mathbb{R}^{n \times N-n}$. Throughout this work, $\bfx_0$ is chosen such that $(\Qperp)^T \bfx_0 = \boldsymbol{0}_{N-n}$ is satisfied. 
\end{remark}
	
\subsubsection{Dynamics of the output} \label{subsubsec:LinearOutClosure}

Following analogous steps as above, the system describing the time evolution of the output can be derived as
    \begin{align*}
        \bfy_k = \bfC \bfx_k = \bfC \bfQ \tbfz_k + \bfC \Qperp \bfw_k\,,\qquad k = 0, \dots, K.
    \end{align*}
With \eqref{eq:UnResSoln},
    this results into 
    \begin{align} \label{eq:OutputResolved}
        \bfy_k = \tbfC \tbfz_k + \sum_{l=0}^{k-1} (\closeOState_{k-l} \tbfz_l + \closeOSignal_{k-l} \bfu_l) \,,\qquad k = 0, \dots, K\,,
    \end{align}
where $\tbfC = \bfC \bfQ$ and
    \begin{align} \label{eq:CloseOpOutTrue}
        \closeOState_{k-l} & = \bfC \Qperp ((\Qperp)^T \bfA_1 \Qperp)^{k-l-1} (\Qperp)^T \bfA_1 \bfQ  \in \R^{s \times n}\,,  \\
        \closeOSignal_{k-l} & = \bfC \Qperp ((\Qperp)^T \bfA_1 \Qperp)^{k-l-1} (\Qperp)^T \bfB \in \R^{s \times p}\,, \notag
    \end{align}
for $l=0,\dots,k-1$.

\subsubsection{Linear reduced models for the state and output with non-Markovian terms} \label{subsubsec:LinearProposedClosure}

Typically, the norm of the operators in the non-Markovian term in \eqref{eq:ResSoln} and \eqref{eq:OutputResolved} decay as one goes further back in time. This motivates truncating the non-Markovian term and taking only at most the previous $L \in \mathbb{N}$ states and inputs into account; we refer to $L$ as the lag. Thus, the reduced models we seek to learn from observation trajectories are parameterized as
\begin{align} \label{eq:ClosureLinear}
       \tbfz_{k+1}^{(L)} & = \tbfA_1 \tbfz_k^{(L)} + \tbfB \bfu_k +  \sum_{l=k-L}^{k-1} (\closeState_{k-l} \tbfz_l^{(L)} + \closeSignal_{k-l} \bfu_l), \quad k = 0,\dots,K-1, \\
        \notag
        \bfy_k^{(L)} & = \tbfC \tbfz_k^{(L)} + \sum_{l=k-L}^{k-1} (\closeOState_{k-l} \tbfz_l^{(L)} + \closeOSignal_{k-l} \bfu_l)\,,
\end{align}
where we use the convention that $\tbfz_l^{(L)} = \boldsymbol{0}_{n \times 1}$ and $\bfu_l = \boldsymbol{0}_{p \times 1}$ for negative integers $l \in \mathbb{Z}^-$ in the remainder of this work.

Let us remark on the error of non-Markovian versus Markovian reduced models. Denote by $\tbfz_k^{(0)}$ the reduced state resulting from retaining the Markovian term only in \eqref{eq:ResSoln}, i.e. the state of the Markovian reduced model  $\tbfz_{k+1}^{(0)} = \tbfA_1 \tbfz_k^{(0)} + \tbfB \bfu_k$. The initial conditions for the non-Markovian and Markovian model are set to be identical  $\tbfz_0 = \tbfz_0^{(0)} = \tbfz_0^{(L)}$.
For a fixed time step $k$, if 
\begin{align} \label{eq:TruncClosureErrCondition}
        \|\bfV(\tbfz_k - \tbfz_l^{(L)})\|_2 = \|\tbfz_k - \tbfz_l^{(L)}\|_2 < \|\bfz_k - \bfV \tbfz_k^{(0)}\|_2 - \|\bfz_k - \bfV \tbfz_k\|_2
\end{align}
holds, then, by the triangle inequality, we obtain 
\begin{align} \label{eq:TruncClosureErr}
        \|\bfz_k - \bfV \tbfz_k^{(L)} \|_2 = \|\bfz_k - \bfV \tbfz_k +\bfV (\tbfz_k - \tbfz_k^{(L)}) \|_2 < \|\bfz_k - \bfV \tbfz_k^{(0)}\|_2.
\end{align}
Inequality \eqref{eq:TruncClosureErr} implies 
that, if \eqref{eq:TruncClosureErrCondition} holds, the error of the non-Markovian state $\tbfz_k^{(L)}$ is lower than the error of the Markovian state $\tbfz_k^{(0)}$. From \eqref{eq:PartialStateDirSum}, we see that $\tbfz_k$ is the projection of $\bfz_k$ onto $\mathcal{V}$. Therefore, $ \|\bfz_k - \bfV \tbfz_k^{(0)}\|_2 - \|\bfz_k - \bfV \tbfz_k\|_2 \ge 0$. The condition \eqref{eq:TruncClosureErrCondition} holds, for example, if $k < L+2$ since $\tbfz_k = \tbfz_k^{(L)}$. 

In general, however, examples can be constructed for which \eqref{eq:TruncClosureErr} is violated for selected time steps $k$ with $L$ fixed; see Appendix \ref{appendix}. A theoretical analysis remains future work.

\subsection{Reduced models with non-Markovian terms for nonlinear polynomial systems} \label{subsec:Nonlinear}

For polynomial full models \eqref{eq:FOM}, an analogous procedure as in Section \ref{subsec:Linear} leads to polynomial dynamics of the projected observations $\tbfz_k$ with a polynomial non-Markovian term that takes the history of states and inputs into account. However, the number of summands in the non-Markovian term grows combinatorially in the degree $\ell$ of the full model and involves powers of the projected observations $\tbfz_k$ and the inputs $\bfu_k$ and the Kronecker products of their powers. Thus, even truncating the non-Markovian term at lag $L$---retaining only the summands that are functions of the Markovian state and input and the $L$ previous states and inputs---quickly becomes computationally intractable in terms of number of degrees of freedom and costs of simulating the corresponding model. 

Instead, we propose a two step approximation to design non-Markovian terms for nonlinear polynomial full models:    first, we consider a linear approximation of the non-Markovian term and second, truncate the linearized non-Markovian term at lag $L$. This results in the reduced model
\begin{align} \label{eq:ClosureNonLinear}
       \tbfz_{k+1}^{(L)} & = \sum_{j=1}^{\ell} \tbfA_j (\tbfz_k^{(L)})^j + \tbfB \bfu_k +  \sum_{l=k-L}^{k-1} (\closeState_{k-l} \tbfz_l^{(L)} + \closeSignal_{k-l} \bfu_l), \quad k = 0,\dots,K-1, \\
        \notag
        \bfy_k^{(L)} & = \tbfC \tbfz_k^{(L)} + \sum_{l=k-L}^{k-1} (\closeOState_{k-l} \tbfz_l^{(L)} + \closeOSignal_{k-l} \bfu_l)\,,       
\end{align}
with the reduced state $\tbfz_k^{(L)}$ and output $\bfy_k^{(L)}$ at time $k$. Notice that the model \eqref{eq:ClosureNonLinear} is nonlinear in the state variable only in the Markovian term but linear in the non-Markovian term.

\subsection{Operator inference for non-Markovian terms} \label{subsec:OpInfClosure}

We now introduce a procedure to learn the non-Markovian operators in \eqref{eq:ClosureLinear} and \eqref{eq:ClosureNonLinear} from data. In particular, we will show that if the full model \eqref{eq:FOM} is linear, then the operators $(\bfE_l,\bfF_l)$ and $(\bfG_l,\bfH_l), l = 1,\dots,L$, in the non-Markovian term of \eqref{eq:ClosureLinear} defined in \eqref{eq:CloseOpStateTrue} and \eqref{eq:CloseOpOutTrue} are recovered with our procedure.

We first discuss a data sampling scheme to generate trajectories of observations. The non-Markovian operators are then learned either simultaneously (``batch'') or in a stagewise manner with operator inference.

\subsubsection{Data generation: Extended re-projection algorithm}

We extend the re-projection algorithm of Section~\ref{subsec:OpInfReproj} to generate data for learning the non-Markovian terms. We emphasize that in the following, re-projection is applied to observations, rather than state trajectories. Instead of performing re-projection after each time step as in Algorithm~\ref{alg:reprojection}, we propose to query the full model for $\Treproj>1$ time steps before the next re-projection step is performed; this cycle is repeated as illustrated in Figure~\ref{fig:ReprojExtend}.

Denote the initial condition by $\bfz_0^{(0)} \in \mathbb{R}^r$ and let $\Nreproj$ be the number  re-projection steps. For the $i$-th re-projection step, denote by $\rbfz_0^{(i)} = \bfV^T\bfz^{(i-1)}_{\Treproj} \in \mathbb{R}^n$ the projected observation and set $\bfz_0^{(i)} = \bfV \rbfz_0^{(i)} \in \mathbb{R}^r$. We then query the full model for $\Treproj$ time steps starting from $\bfz_0^{(i)}$ (or the corresponding $\PartialObs^T\bfz_0^{(i)} \in \mathbb{R}^N$) with inputs $[\bfu_0^{(i)}, \dots, \bfu_{\Treproj-1}^{(i)}]$ to compute the observation trajectory $[\bfz_0^{(i)}, \dots, \bfz_{\Treproj-1}^{(i)},\bfz_{\Treproj}^{(i)}]$ and the corresponding output trajectory $[\bfy_0^{(i)}, \dots, \bfy_{\Treproj-1}^{(i)}]$. Set $\rbfz_k^{(i)} = \bfV^T\bfz_k^{(i)}$ for $k = 1, \dots, \Treproj-1$. The $(i+1)$-th re-projection step is subsequently initiated with $\rbfz_0^{(i+1)} = \bfV^T \bfz_{\Treproj}^{(i)}$ and the full model is sampled again for $\Treproj$ time steps. Thus, the proposed data sampling scheme yields the re-projected trajectories $\{[\rbfz_0^{(i)},\dots,\rbfz_{\Treproj - 1}^{(i)}]\}_{i=1}^{\Nreproj}$ and output data $\{[\bfy_0^{(i)},\dots,\bfy_{\Treproj-1}^{(i)}]\}_{i=1}^{\Nreproj}$ as a result. The extended re-projection procedure is summarized in Algorithm~\ref{alg:ClosureReproj}.

\begin{figure}
    \centering
    \includegraphics[width=1\columnwidth]{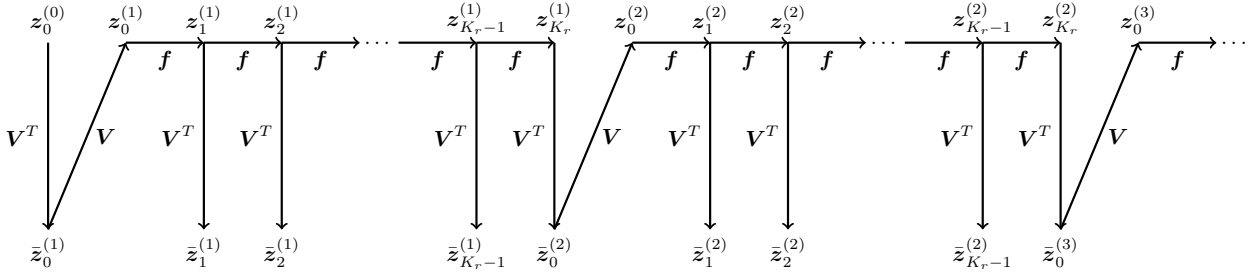}
\caption{Diagram of the extended re-projection algorithm. Instead of performing re-projection steps after each time step, the full model is queried for $K_r$ time steps before re-projection is performed. The observations resulting from the queried full model are then projected to the reduced subspace. }
\label{fig:ReprojExtend}
\end{figure}

\begin{remark} If querying the full model requires an initial condition at each of the $N$ state components, in contrast to Algorithm~\ref{alg:ClosureReproj} that provides an initial condition $\bfz_0^{(0)}$ only at the $r$ observed state components, then such an initial condition can be derived with $\PartialObs^T\bfz_0^{(0)}$ and the proposed re-projection procedure is still applicable. 
\end{remark}

\begin{algorithm}[t]
\caption{Data sampling with extended re-projection}
\begin{algorithmic}[1]
  \STATE Set $\rbfz^{(1)}_0 = \bfV^T\bfz^{(0)}_0$ 
  \FOR{$i=1,\dots,\Nreproj$}
    % project back to FOM
    \STATE Set $\bfz_0^{(i)} = \bfV \rbfz_0^{(i)}$ 
    \STATE Query \eqref{eq:PartialObsFOM} at $\bfz_0^{(i)}$ (or at $\PartialObs^T \bfz_0^{(i)}$) and $[\bfu_0^{(i)},\dots,\bfu_{\Treproj-1}^{(i)}]$ to obtain $[\bfz_0^{(i)},\dots,\bfz_{\Treproj-1}^{(i)},\bfz_{\Treproj}^{(i)}]$ and $[\bfy_0^{(i)},\dots,\bfy_{\Treproj-1}^{(i)}]$  
    \STATE Set $\rbfz_{k}^{(i)} = \bfV^T \bfz_k^{(i)}$ for $k=1,\dots,\Treproj-1$
    
    \STATE Set $\rbfz_0^{(i+1)} = \bfV^T  \bfz_{\Treproj}^{(i)}$% and $\bfy_0^{(i+1)} = \bfC \bfx_{\Treproj}^{(i)}$
  \ENDFOR

  \STATE Return $\{[\rbfz^{(i)}_0, \dots, \rbfz^{(i)}_{\Treproj-1}]\}_{i=1}^{\Nreproj}$ and $\{[\bfy_0^{(i)},\dots,\bfy_{\Treproj-1}^{(i)}]\}_{i=1}^{\Nreproj}$
\end{algorithmic}
\label{alg:ClosureReproj}
\end{algorithm}

\subsubsection{Stagewise operator inference of non-Markovian operators} \label{subsubsec:stagewise}

As shown in \cite{paper:Peherstorfer2019}, under appropriate conditions on $\Nreproj$ and the resulting data matrix in the least squares problem, the trajectories $\{[\rbfz_0^{(i)},\rbfz_1^{(i)}]\}_{i=1}^{\Nreproj}$ and $\{(\rbfz_0^{(i)},\bfy_0^{(i)})\}_{i=1}^{\Nreproj}$ are sufficient to  recover the Markovian operators in \eqref{eq:ClosureLinear} and \eqref{eq:ClosureNonLinear}. This is because for $i=1,\dots,\Nreproj$, $\rbfz_0^{(i)}$ and $\rbfz_1^{(i)}$ satisfy the Markovian dynamics
$$\rbfz_1^{(i)} = \sum_{j=1}^{\ell} \tbfA_j (\rbfz_0^{(i)})^j + \tbfB \bfu^{(i)}_0.$$ Thus, we will only focus on inferring the non-Markovian  operators. Observe that for fixed $i$, the dynamical system satisfied by $\rbfz_k^{(i)}$ is exactly that of $\tbfz_k$ defined in Sections~\ref{subsec:Linear} and~\ref{subsec:Nonlinear} since $\rbfz_k^{(i)} = \bfV^T\bfz_k^{(i)}$ for $k = 1, \dots, \Treproj-1$. 

We now propose stagewise operator inference where the non-Markovian operators $(\bfE_{l},\bfF_{l})$ and $(\bfG_{l},\bfH_{l})$ in \eqref{eq:ClosureLinear} and  \eqref{eq:ClosureNonLinear} are learned sequentially from data for each $l = 1,\dots, L$. Denote by $(\hbfE_{l},\hbfF_{l})$ and $(\hbfG_{l},\hbfH_{l})$ the estimates of $(\bfE_{l},\bfF_{l})$ and $(\bfG_{l},\bfH_{l})$ for $l \in \mathbb{N}$. For fixed $l$, suppose that the operators $(\hbfE_{j},\hbfF_{j})$ and $(\hbfG_{j},\hbfH_{j})$  for $j=1,\dots,l-1$ have already been inferred. The operators $(\bfE_{l},\bfF_{l})$ and $(\bfG_{l},\bfH_{l})$ are learned by solving for $(\hbfE_{l},\hbfF_{l})$ and $(\hbfG_{l},\hbfH_{l})$ in the least squares problems given by
\begin{align} \label{eq:StageLSState}
        \min_{\hbfE_{l} \in \R^{n \times n}, \hbfF_{l} \in \R^{n \times p}} \sum_{i=1}^{\Nreproj} \|\StageDiscrepState^{(i)}(l,1) -(\hbfE_{l} \rbfz_0^{(i)} + \hbfF_{l} \bfu_0^{(i)})\|_2^2
\end{align}
and
\begin{align} \label{eq:StageLSOut}
        \min_{\hbfG_{l} \in \R^{s \times n}, \hbfH_{l} \in \R^{s \times p}} \sum_{i=1}^{\Nreproj} \|\StageDiscrepOut^{(i)}(l,1) -(\hbfG_{l} \rbfz_0^{(i)} + \hbfH_{l} \bfu_0^{(i)})\|_2^2\,,
\end{align}
where for $\alpha \in \{0,1\}$,
\begin{align} \label{eq:StageDiscrepComp}
        \StageDiscrepState^{(i)}(l,\alpha) &= \rbfz_{l+1}^{(i)} - \left( \sum_{j=1}^{\ell} \tbfA_j (\rbfz^{(i)}_l)^j + \tbfB \bfu_l^{(i)} + \alpha \sum_{j=1}^{l-1} (\hbfE_{l-j} \rbfz_j^{(i)} + \hbfF_{l-j} \bfu_j^{(i)}) \right)\,, \\
        \StageDiscrepOut^{(i)}(l,\alpha) & = \bfy_{l}^{(i)} - \left( \tbfC \rbfz_l^{(i)} + \alpha \sum_{j=1}^{l-1} (\hbfG_{l-j} \rbfz_j^{(i)} + \hbfH_{l-j} \bfu_j^{(i)}) \right). \notag
\end{align}
The quantities $\StageDiscrepState^{(i)}(l,1)$ and $\StageDiscrepOut^{(i)}(l,1)$ are the residuals of $\rbfz_{l+1}^{(i)}$ and $\bfy_l^{(i)}$ with respect to the reduced model with non-Markovian term with lag $l - 1$.

To write the least-squares problems in matrix form, set $\StageDiscrepState(l,1) = [\StageDiscrepState^{(1)}(l,1),\dots,\StageDiscrepState^{(\Nreproj)}(l,1)] \in \R^{n \times \Nreproj}$ and $\StageDiscrepOut(l,1) = [\StageDiscrepOut^{(1)}(l,1),\dots,\StageDiscrepOut^{(\Nreproj)}(l,1)] \in \R^{s \times \Nreproj}$ and let $\hbfO_l^{\bfz} = [\hbfE_{l} \,\,\, \hbfF_{l}] \in \R^{n \times (n+p)}$ and $\hbfO_l^{\bfy} = [\hbfG_{l} \,\,\, \hbfH_{l}] \in \R^{s \times (n+p)}$ be the matrices of unknowns. Set the data matrix as 
\begin{align} \label{eq:StageDataMatrix}
        \hbfD = \begin{bmatrix}
                \rbfz_0^{(1)} & \dots & \rbfz_0^{(\Nreproj)}\\
                \bfu_0^{(1)} & \dots & \bfu_0^{(\Nreproj)}
               \end{bmatrix} \in \R^{(n+p) \times \Nreproj}
\end{align}
so that \eqref{eq:StageLSState} and \eqref{eq:StageLSOut} are, in matrix form,
\begin{align} \label{eq:StageLSStateMatrix}
        \min_{\hbfO_l^{\bfz} \in \R^{n \times (n+p)}} \|\hbfD^T (\hbfO_l^{\bfz})^T - \StageDiscrepState(l,1)^T\|_F^2
\end{align}
and     
\begin{align} \label{eq:StageLSOutMatrix}
        \min_{\hbfO_l^{\bfy} \in \R^{s \times(n+p)}} \|\hbfD^T (\hbfO_l^{\bfy})^T - \StageDiscrepOut(l,1)^T\|_F^2,
\end{align}
respectively. Notice that the data matrix in the least squares problems \eqref{eq:StageLSStateMatrix} and \eqref{eq:StageLSOutMatrix} is the same for all $l = 1, \dots, L$.

Algorithm~\ref{alg:Stagewise} summarizes the stagewise operator inference procedure for the non-Markovian term. To learn the operators up to lag $L$, the number of time steps in the extended re-projection algorithm has to be $\Treproj \ge L+2$.  If the full model \eqref{eq:FOM} is linear, the non-Markovian operators as defined in \eqref{eq:CloseOpStateTrue} and \eqref{eq:CloseOpOutTrue} can be recovered exactly under appropriate conditions on $\Nreproj$ and $\hbfD$. This is shown by the following proposition. 

\begin{proposition} \label{prop:StageLinear}
Let the full model in \eqref{eq:FOM} be linear. If $\Treproj \ge L+2$, $\Nreproj \ge n+p$  and the data matrix $\hbfD$ in \eqref{eq:StageDataMatrix} has full rank, then for $l = 1,\dots,L$, the least squares problems \eqref{eq:StageLSState} and  \eqref{eq:StageLSOut} are uniquely solved at $(\hbfE_{l},\hbfF_{l}) = (\bfE_{l},\bfF_{l})$ and $(\hbfG_{l},\hbfH_{l}) = (\bfG_{l},\bfH_{l})$ defined in \eqref{eq:CloseOpStateTrue} and \eqref{eq:CloseOpOutTrue}, respectively, with objective value 0. 
\end{proposition}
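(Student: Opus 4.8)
The plan is to show that the extended re-projection data exactly satisfy the non-Markovian recursions \eqref{eq:ResSoln} and \eqref{eq:OutputResolved} inside each re-projection block, so that each stagewise residual collapses, by induction on $l$, to a single term that is linear in $\rbfz_0^{(i)}$ and $\bfu_0^{(i)}$ with coefficients equal to the true operators. The linearity of \eqref{eq:FOM} ($\ell=1$) is used throughout so that the Markovian part in \eqref{eq:StageDiscrepComp} is simply $\tbfA_1\rbfz_l^{(i)}+\tbfB\bfu_l^{(i)}$.

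First I would verify that each block starts with a vanishing unresolved component, i.e. $\bfw_0^{(i)}=0$ in the decomposition \eqref{eq:ROMDirSum}. Algorithm~\ref{alg:ClosureReproj} initializes the $i$-th block at $\bfz_0^{(i)}=\bfV\rbfz_0^{(i)}\in\Vcal$ and queries \eqref{eq:FOM} at $\PartialObs^T\bfz_0^{(i)}$. Since $\PartialObs(\PartialObsPerp)^T=\boldsymbol{0}$ and the columns of $\bfV$ and $\basisPerp$ are orthonormal, both the unobserved part $\bfz_0^{\perp}=\PartialObsPerp\PartialObs^T\bfz_0^{(i)}=\boldsymbol{0}$ and the in-complement part $\hbfz_0^{(i)}=(\basisPerp)^T\bfV\rbfz_0^{(i)}=\boldsymbol{0}$, hence $\bfw_0^{(i)}=\boldsymbol{0}$. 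Because the full model is linear and $\bfw_0^{(i)}=\boldsymbol{0}$, the derivation of Section~\ref{subsubsec:LinearStateClosure} and Section~\ref{subsubsec:LinearOutClosure} applies verbatim to the block, and the re-projected states $\rbfz_k^{(i)}=\bfV^T\bfz_k^{(i)}$ satisfy \eqref{eq:ResSoln} (and the outputs satisfy \eqref{eq:OutputResolved}) exactly for $k=0,\dots,\Treproj-1$. The hypothesis $\Treproj\ge L+2$ is exactly what guarantees that the state $\rbfz_{l+1}^{(i)}$ appearing in the stage-$l$ residual is among the available samples for every $l\le L$.

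Next I would run the induction on $l$. Substituting \eqref{eq:ResSoln} for $\rbfz_{l+1}^{(i)}$ into $\StageDiscrepState^{(i)}(l,1)$ from \eqref{eq:StageDiscrepComp} cancels the Markovian term, and reindexing the memory sum (splitting off the $m=0$ summand) yields
\[
\StageDiscrepState^{(i)}(l,1) = \closeState_l\rbfz_0^{(i)} + \closeSignal_l\bfu_0^{(i)} + \sum_{m=1}^{l-1}\bigl((\closeState_{l-m}-\hbfE_{l-m})\rbfz_m^{(i)} + (\closeSignal_{l-m}-\hbfF_{l-m})\bfu_m^{(i)}\bigr).
\]
In the base case $l=1$ the trailing sum is empty; in the inductive step the hypothesis $(\hbfE_j,\hbfF_j)=(\bfE_j,\bfF_j)$ for $j=1,\dots,l-1$ makes it vanish. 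In either case $\StageDiscrepState^{(i)}(l,1)=\bfE_l\rbfz_0^{(i)}+\bfF_l\bfu_0^{(i)}$, so the choice $(\hbfE_l,\hbfF_l)=(\bfE_l,\bfF_l)$ drives the sum of squares in \eqref{eq:StageLSState} to $0$ and is therefore a global minimizer.

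Finally I would establish uniqueness via the matrix form \eqref{eq:StageLSStateMatrix}: any zero-objective minimizer $\hbfO_l^{\bfz}=[\hbfE_l\ \hbfF_l]$ must satisfy $\hbfO_l^{\bfz}\hbfD=\StageDiscrepState(l,1)$, and since $\Nreproj\ge n+p$ with $\hbfD$ of full rank $n+p$, the matrix $\hbfD^T$ has full column rank, so this solution is unique, forcing $(\hbfE_l,\hbfF_l)=(\bfE_l,\bfF_l)$. The identical argument applied to $\StageDiscrepOut^{(i)}(l,1)$, the output recursion \eqref{eq:OutputResolved}, and the definitions \eqref{eq:CloseOpOutTrue} gives $(\hbfG_l,\hbfH_l)=(\bfG_l,\bfH_l)$. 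The main obstacle I expect is the bookkeeping of the memory-sum reindexing that produces the collapsed residual above and checking that the induction closes cleanly in parallel for the state and output recursions; the $\bfw_0^{(i)}=\boldsymbol{0}$ reduction and the full-rank uniqueness step are routine once the notation is in place.
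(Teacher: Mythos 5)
Your proof is correct and follows essentially the same route as the paper's: the re-projected trajectories satisfy the exact non-Markovian dynamics \eqref{eq:ResSoln} and \eqref{eq:OutputResolved} within each re-projection block, so the true operators give objective value $0$, and uniqueness follows from the full column rank of $\hbfD^T$. You are more explicit than the paper in two places it leaves implicit — verifying $\bfw_0^{(i)}=\boldsymbol{0}$ at each block start and running the induction on $l$ that collapses the stage-$l$ residual to $\bfE_l\rbfz_0^{(i)}+\bfF_l\bfu_0^{(i)}$ once the earlier operators are recovered — both of which are accurate and needed for the argument to close.
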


\begin{proof}
The equations \eqref{eq:ResSoln} and \eqref{eq:OutputResolved} describe the dynamics of $\tbfz_k = \bfV^T \bfz_k$ and $\bfy_k$. Therefore, following the extended re-projection procedure in Algorithm \ref{alg:ClosureReproj}, for fixed $i=1,\dots,\Nreproj$, the reduced state $\rbfz_k^{(i)}$, the input $\bfu_k^{(i)}$, and the output  $\bfy_k^{(i)}$ satisfy
    \begin{align*}
        \rbfz^{(i)}_{k+1} & = \tbfA_1 \rbfz^{(i)}_k + \tbfB \bfu_k^{(i)} +  \sum_{l=0}^{k-1} (\closeState_{k-l} \rbfz^{(i)}_l + \closeSignal_{k-l} \bfu^{(i)}_l)\,, \\
        \bfy^{(i)}_k & = \tbfC \rbfz^{(i)}_k + \sum_{l=0}^{k-1} (\closeOState_{k-l} \rbfz^{(i)}_l + \closeOSignal_{k-l} \bfu^{(i)}_l)\,,
    \end{align*}
for $k=1,\dots,\Treproj-2$, where $(\bfE_{l},\bfF_{l})$ and $(\bfG_{l},\bfH_{l})$ are defined in \eqref{eq:CloseOpStateTrue} and \eqref{eq:CloseOpOutTrue}. This implies that \eqref{eq:StageLSState} and \eqref{eq:StageLSOut} have objective value 0 when $(\hbfE_{l},\hbfF_{l}) = (\bfE_{l},\bfF_{l})$ and $(\hbfG_{l},\hbfH_{l}) = (\bfG_{l},\bfH_{l})$. The solutions to these least squares problems are unique since the problems \eqref{eq:StageLSState} and \eqref{eq:StageLSStateMatrix} and the problems \eqref{eq:StageLSOut} and \eqref{eq:StageLSOutMatrix} each have the same solution if $\hbfD$ is full rank.
\end{proof}
    
\begin{algorithm}[t]
\caption{Stagewise operator inference for learning non-Markovian correction terms}
\begin{algorithmic}[1]
  \STATE Formulate the data matrix $\hbfD$ in \eqref{eq:StageDataMatrix}
  \FOR{$l=1,\dots,L$}
    \STATE Compute the components of $\StageDiscrepState(l,1)$ and $\StageDiscrepOut(l,1)$ defined in \eqref{eq:StageDiscrepComp}
    \STATE Solve for $\hbfO_l^{\bfz}$ and $\hbfO_l^{\bfy}$ in \eqref{eq:StageLSStateMatrix} and \eqref{eq:StageLSOutMatrix}
  \ENDFOR 
  \STATE Return $(\hbfE_{l},\hbfF_{l})$ and $(\hbfG_{l},\hbfH_{l})$ for $l = 1,\dots,L$
\end{algorithmic}
\label{alg:Stagewise}
\end{algorithm}

\subsubsection{Batch operator inference of non-Markovian operators} \label{subsubsec:batch}

We now propose batch operator inference that infers all non-Markovian operators  simultaneously instead of proceeding in a sequential manner as the stagewise approach. Batch operator inference learns the operators $(\bfE_{l},\bfF_{l})$ and $(\bfG_{l},\bfH_{l})$ by solving for $(\bbfE_{l},\bbfF_{l})$ and $(\bbfG_{l},\bbfH_{l})$ in the least squares problems given by
\begin{align} \label{eq:BatchLSState}
        \min_{\substack{\bbfE_{l} \in \R^{n \times n},  \bbfF_{l} \in \R^{n \times p},\\ l = 1,\dots,L}} \sum_{k=1}^{\Treproj-2} \sum_{i=1}^{\Nreproj} \left \Vert \StageDiscrepState^{(i)}(k,0) -  \sum_{l=k-L}^{k-1} \left(\bbfE_{k-l} \rbfz^{(i)}_l + \bbfF_{k-l} \bfu^{(i)}_l\right)  \right \Vert_2^2
\end{align}
and
\begin{align} \label{eq:BatchLSOut}
        \min_{\substack{\bbfG_{l} \in \R^{s \times n},  \bbfH_{l} \in \R^{s \times p},\\ l = 1,\dots,L}} \sum_{k=1}^{\Treproj-2} \sum_{i=1}^{\Nreproj} \left \Vert \StageDiscrepOut^{(i)}(k,0) -  \sum_{l=k-L}^{k-1} \left(\bbfG_{k-l} \rbfz^{(i)}_l + \bbfH_{k-l} \bfu^{(i)}_l\right)  \right \Vert_2^2\,,
\end{align}
where $\StageDiscrepState^{(i)}(k,0)$ and $\StageDiscrepOut^{(i)}(k,0)$ defined in \eqref{eq:StageDiscrepComp} represent the discrepancy between the current reduced state and output with the corresponding Markovian model.

To write the least-squares problems \eqref{eq:BatchLSState} and \eqref{eq:BatchLSOut} in matrix form, set 
$\StageDiscrepState(k,0) = [\StageDiscrepState^{(1)}(k,0),\dots,\StageDiscrepState^{(\Nreproj)}(k,0)] \in \R^{n \times \Nreproj}$ and $\StageDiscrepOut(k,0) = [\StageDiscrepOut^{(1)}(k,0),\dots,\StageDiscrepOut^{(\Nreproj)}(k,0)] \in \R^{s \times \Nreproj}$ for $k=1,\dots,\Treproj-2$. The right hand side matrix is then given by 
\begin{align} \label{eq:BatchRHS}
        \StageDiscrepState & = [\StageDiscrepState(1,0), \dots, \StageDiscrepState(\Treproj-2,0)] \in \R^{n \times \Nreproj (\Treproj-2)}\,, \\
        \StageDiscrepOut & = [\StageDiscrepOut(1,0), \dots, \StageDiscrepOut(\Treproj-2,0)] \in \R^{s \times \Nreproj (\Treproj-2)}. \notag
\end{align}
The matrices of unknowns are 
    \begin{align} \label{eq:BatchMatUnknowns}
        \bbfO^{\bfz} &=[\bbfE_{1} \,\,\, \bbfF_{1} \,\,\, \dots \,\,\, \bbfE_{L} \,\,\, \bbfF_{L}] \in \R^{n \times (n+p)L}\,, \\
        \bbfO^{\bfy} & = [\bbfG_{1} \,\,\, \bbfH_{1} \,\,\, \dots \,\,\, \bbfG_{L} \,\,\, \bbfH_{L}] \in \R^{s \times (n+p)L}. \notag
    \end{align}
    Finally, for $k=1,\dots,\Treproj-2$, define $\bbfD(k) \in \R^{(n+p)L \times \Nreproj}$ as
    \begin{align*}  
\bbfD^T(k) =  
\begin{bmatrix} 
(\rbfz_{k-1}^{(1)})^T & (\bfu_{k-1}^{(1)})^T & \dots & (\rbfz_{k-L}^{(1)})^T & (\bfu_{k-L}^{(1)})^T \\
\vdots & \vdots & \ddots & \vdots & \vdots \\
(\rbfz_{k-1}^{(\Nreproj)})^T & (\bfu_{k-1}^{(\Nreproj)})^T & \dots & (\rbfz_{k-L}^{(\Nreproj)})^T & (\bfu_{k-L}^{(\Nreproj)})^T
\end{bmatrix}
\end{align*}
and set the data matrix to
\begin{align} \label{eq:BatchDataMat}
        \bbfD = [\bbfD(1) \,\,\, \dots \,\,\, \bbfD(\Treproj-2)] \in \R^{(n+p)L \times (\Treproj-2)\Nreproj}.
\end{align}
The problems \eqref{eq:BatchLSState} and \eqref{eq:BatchLSOut} are then, in matrix form,
\begin{align} \label{eq:BatchLSStateMat}
    \min_{\bbfO^{\bfz} \in \R^{n \times (n+p)L}} \| \bbfD^T (\bbfO^{\bfz})^T - \StageDiscrepState^T\|_F^2
    \end{align}
and
\begin{align} \label{eq:BatchLSOutMat}
    \min_{\bbfO^{\bfy} \in \R^{n \times (n+p)L}} \| \bbfD^T (\bbfO^{\bfy})^T - \StageDiscrepOut^T\|_F^2,
    \end{align}
respectively.

The batch operator inference procedure is summarized in Algorithm \ref{alg:Batch}. Like the stagewise approach, batch operator inference recovers the non-Markovian operators if the full model \eqref{eq:FOM} is linear.

\begin{proposition} \label{prop:BatchLinear}
If \eqref{eq:FOM} is linear, $\Treproj = L+2$, $(\Treproj-2)\Nreproj \ge (n+p)L$ and the data matrix $\bbfD$ in \eqref{eq:BatchDataMat} is full rank, the least squares problems \eqref{eq:BatchLSState} and \eqref{eq:BatchLSOut} are uniquely minimized at $(\bbfE_{l},\bbfF_{l}) = (\bfE_{l},\bfF_{l})$ and $(\bbfG_{l},\bbfH_{l}) = (\bfG_{l},\bfH_{l})$ for $l=1,\dots,L$ defined in \eqref{eq:CloseOpStateTrue} and \eqref{eq:CloseOpOutTrue}, respectively, with objective value 0.
\end{proposition}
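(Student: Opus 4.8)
The plan is to mirror the argument for Proposition~\ref{prop:StageLinear}, leveraging that the re-projected trajectories from Algorithm~\ref{alg:ClosureReproj} obey the exact dynamics \eqref{eq:ResSoln} and \eqref{eq:OutputResolved}. First I would recall from the proof of Proposition~\ref{prop:StageLinear} that, for each fixed $i$, the quantities $\rbfz_k^{(i)}$, $\bfu_k^{(i)}$, and $\bfy_k^{(i)}$ satisfy the full-memory recursion with the true operators $(\closeState_l,\closeSignal_l)$ and $(\closeOState_l,\closeOSignal_l)$ for $k=1,\dots,\Treproj-2$. Because the full model is linear (so $\ell=1$), the state discrepancy in \eqref{eq:StageDiscrepComp} with $\alpha=0$ collapses to
\[
\StageDiscrepState^{(i)}(k,0) = \sum_{l=0}^{k-1}\bigl(\closeState_{k-l}\rbfz_l^{(i)} + \closeSignal_{k-l}\bfu_l^{(i)}\bigr),
\]
and likewise $\StageDiscrepOut^{(i)}(k,0) = \sum_{l=0}^{k-1}(\closeOState_{k-l}\rbfz_l^{(i)} + \closeOSignal_{k-l}\bfu_l^{(i)})$.

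Next I would verify that substituting the true operators into the batch objectives \eqref{eq:BatchLSState} and \eqref{eq:BatchLSOut} produces residual zero. The decisive step is the hypothesis $\Treproj=L+2$, which forces the outer index to range only over $k=1,\dots,L$. For every such $k$ the truncated sum $\sum_{l=k-L}^{k-1}$ has lower limit $k-L\le 0$, so by the convention that $\rbfz_l^{(i)}$ and $\bfu_l^{(i)}$ vanish for negative $l$ it coincides exactly with the full-memory sum $\sum_{l=0}^{k-1}$. Hence, setting $(\bbfE_l,\bbfF_l)=(\closeState_l,\closeSignal_l)$ gives
\[
\sum_{l=k-L}^{k-1}\bigl(\bbfE_{k-l}\rbfz_l^{(i)} + \bbfF_{k-l}\bfu_l^{(i)}\bigr) = \StageDiscrepState^{(i)}(k,0)
\]
for all admissible $k$ and $i$, so every summand in \eqref{eq:BatchLSState} is zero; the identical argument with $(\bbfG_l,\bbfH_l)=(\closeOState_l,\closeOSignal_l)$ annihilates \eqref{eq:BatchLSOut}. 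This shows the claimed operators attain objective value $0$.

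Finally I would establish uniqueness through the rank hypothesis. Recasting the problems in the matrix forms \eqref{eq:BatchLSStateMat} and \eqref{eq:BatchLSOutMat}, the design matrix $\bbfD^T$ has full column rank $(n+p)L$ precisely under the stated assumptions $(\Treproj-2)\Nreproj\ge (n+p)L$ and $\bbfD$ full rank; a linear least squares problem with full-column-rank design matrix has a unique minimizer, so the zero-residual solution exhibited above is the only one. The verification that \eqref{eq:BatchLSState} and \eqref{eq:BatchLSStateMat} share a minimizer is routine bookkeeping: by the definition of $\bbfD(k)$, the $i$-th column of $\bbfO^{\bfz}\bbfD(k)$ is exactly $\sum_{l=k-L}^{k-1}(\bbfE_{k-l}\rbfz_l^{(i)}+\bbfF_{k-l}\bfu_l^{(i)})$, so the vectorized sum-of-squares equals the Frobenius objective. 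I expect the only real subtlety to be conceptual, namely recognizing why $\Treproj=L+2$ must hold with equality rather than merely $\Treproj\ge L+2$ as in the stagewise case: equality is exactly the condition ensuring the lag-$L$ truncation discards no memory over the sampled window $k\le L$, which is what forces the batch residual to vanish at the true operators.
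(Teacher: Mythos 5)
Your proposal is correct and follows essentially the same route as the paper, whose entire proof of this proposition is the remark that it is analogous to the proof of Proposition~\ref{prop:StageLinear}. In fact, you supply the one detail that the paper leaves implicit: the hypothesis $\Treproj = L+2$ forces every sampled index $k \le \Treproj-2 = L$ to satisfy $k-L \le 0$, so by the zero-padding convention the truncated lag-$L$ sum coincides with the full-memory sum in \eqref{eq:ResSoln} and \eqref{eq:OutputResolved}, making the residual vanish at the intrusive operators \eqref{eq:CloseOpStateTrue} and \eqref{eq:CloseOpOutTrue}, with uniqueness then following from the full-column-rank design matrix exactly as in the stagewise case.
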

    
\begin{proof}
The proof is analogous to the proof of  Proposition~\ref{prop:StageLinear}.
\end{proof}

\begin{algorithm}[t]
\caption{Batch operator inference for learning non-Markovian correction terms}
\begin{algorithmic}[1]
  \FOR{$k = 1,\dots,\Treproj-2$}
    \STATE Compute $\StageDiscrepState(k,0), \StageDiscrepOut(k,0), \bbfD(k)$
  \ENDFOR
  \STATE Formulate $\StageDiscrepState$ and $\StageDiscrepOut$ in \eqref{eq:BatchRHS}
  \STATE Formulate the data matrix $\bbfD$ in \eqref{eq:BatchDataMat}
  \STATE Solve for $\bbfO^{\bfz}$ and $\bbfO^{\bfy}$ in \eqref{eq:BatchLSStateMat} and \eqref{eq:BatchLSOutMat}
  \STATE Return $(\bbfE_{l},\bbfF_{l})$ and $(\bbfG_{l},\bbfH_{l},)$ for $l = 1,\dots,L$
\end{algorithmic}
\label{alg:Batch}
\end{algorithm}

\section{Numerical experiments} \label{sec:NumExp}

We conduct numerical experiments with data from linear full models based on the  convection-diffusion equation in Section \ref{subsec:Convdiff} and nonlinear models corresponding to diffusion-reaction processes in Section \ref{subsec:react2d} and the Chafee-Infante equation in Section \ref{subsec:Chafee}. The number of the observed state components ranges from 20, 40, 60 to 80 percent of the total number of state components $N$, the dimension of the full system \eqref{eq:FOM}. The observed components correspond to equidistant points in the spatial domain for problems with one-dimensional spatial domains. In problems with two-dimensional spatial domains, the observed state components are selected equidistantly with respect to the indexing of the grid points. The Markovian operators are learned according to the methodology in \cite{paper:PeherstorferW2016,paper:Peherstorfer2019} which guarantees recovery of the operators in intrusive model reduction from data in our examples up to numerical errors. 

\subsection{Convection-diffusion equation} \label{subsec:Convdiff}

Consider a linear parabolic PDE described by the convection-diffusion equation on the time domain $t \in (0,0.5)$ and spatial domain $\Omega = (0,1) \times (0,0.25)$  with boundary $\partial \Omega$ given by
\begin{align} \label{eq:ConvDiffPDE}
\hspace{-1in}   \frac{\partial}{\partial t} x(\xi_1,\xi_2,t) &= \nabla \cdot (\nabla x(\xi_1,\xi_2,t)) - (1,1) \cdot \nabla x(\xi_1,\xi_2,t), \\
\hspace{-1in}   x(\xi_1,\xi_2,t) & = 0 \text{\,\, for \,\,} (\xi_1,\xi_2) \in \partial \Omega \backslash \cup_{i=1}^5 E_i, \notag \\ 
\hspace{-1in}  \nabla x(\xi_1,\xi_2,t) \cdot \mathbf{n} & = u_i(t) \text{\,\, for \,\,} (\xi_1,\xi_2) \in E_i, i = 1,\dots, 5, \notag \\
\hspace{-1in}  x(\xi_1,\xi_2,0) & = 0 \notag
\end{align}
for $(\xi_1,\xi_2,t) \in \Omega \times (0,0.5).$ The pieces of the boundary $\partial \Omega$ with prescribed Neumann conditions are $E_1 = \{(\xi_1,\xi_2):\xi_1 \in (0.8,1), \xi_2 = 0\} \cup \{(\xi_1,\xi_2): \xi_1 = 1, \xi_2 \in (0,0.25)\}$, $E_2 = \{(\xi_1,\xi_2): \xi_1 \in (0.6,0.8), \xi_2 = 0.25\}$, $E_3 = \{(\xi_1,\xi_2): \xi_1 \in (0.2,0.4), \xi_2 = 0.25\}$ , $E_4 = \{(\xi_1,\xi_2): \xi_1 \in (0,0.2), \xi_2 = 0\}$, and $E_5 = \{(\xi_1,\xi_2): \xi_1 \in (0.4,0.6), \xi_2 = 0\}$. The Neumann boundary condition at each of these edges is driven by an input $u_i(t)$ for $i=1,\dots,5$. We set the output to be the integral of the PDE solution along $E_5$, i.e. $y(t) = \int_{E_5} x(\xi_1,\xi_2,t) \, \partial \Omega$.

The PDE \eqref{eq:ConvDiffPDE} is temporally discretized using finite difference with time step size $\delta t = 10^{-5}$ and spatially discretized using square finite elements with width $\Delta \xi_1 = \Delta \xi_2 = 1/75$ and $N = 1121$ linear hat basis functions. This yields the high-dimensional system
\begin{align} \label{eq:ConvDiffFOMMatrix}
    \bfx_{k+1} &= \bfA_1 \bfx_k + \bfB \bfu_k, \qquad k = 0,\dots, K-1\,, \\
    \bfy_{k+1} &= \bfC \bfx_{k+1}. \notag
\end{align}
The basis $\bfV$ of dimension $\nr$ is obtained from snapshots of the observation trajectory (partially observed states) using the input  $\bfu^{\text{basis}}(t) = [\sin(2t), \sin(4t), \dots, \sin(10t)]^T \in \mathbb{R}^5$  at time $t$ and $K = 50001$ time steps. Together with the basis $\bfV$, the extended re-projection algorithm (Algorithm~\ref{alg:ClosureReproj}) is applied with $\Nreproj$ re-projection steps and $\Treproj$ time steps per re-projection step, where inputs to the full model  \eqref{eq:ConvDiffFOMMatrix} at each time step are realizations of a $p$-dimensional random vector with independent components that are uniformly distributed in  $[0,2]$. The test input trajectory is given by $\bfu^{\text{test}}(t) = [ e^t\sin(1.75t), e^t\sin(3.5t), \dots, e^t\sin(8.75t)]^T \in \mathbb{R}^5$. 

\subsubsection{Recovering non-Markovian operators}

Figure~\ref{fig:ConvDiff_StageOpNorm_dim10} plots the norms  $\|\bfE_{l}\|_2$ (left panel) and $\|\bfF_{l}\|_2$ (right panel) of the non-Markovian operators \eqref{eq:CloseOpStateTrue} obtained with intrusive model reduction for lag values $l=1,\dots,100$. The four curves in each panel correspond to 20\%, 40\%, 60\%, and 80\% observed state components, respectively. The norms of the non-Markovian operators decay with increasing lag $l$, which supports truncating the non-Markovian term. 

Set now the lag to $L = 100$ with  60\% of all state components observed. Consider stagewise inference (Algorithm~\ref{alg:Stagewise}) of the non-Markovian operators applied to trajectories sampled with the extended re-projection algorithm with $\Nreproj = 25$ re-projection steps $\Treproj = 102$ time steps per re-projection step. This means that Proposition~\ref{prop:StageLinear} applies because the full model \eqref{eq:ConvDiffFOMMatrix} is linear in the state variable and we have numerically ensured that the data matrix is full rank. Thus, stagewise inference recovers the very same non-Markovian operators \eqref{eq:CloseOpStateTrue} that are obtained with intrusive model reduction. Figure~\ref{fig:ConvDiff_OpNorm_StageVSBatchDetailed} compares the norms of the operators $\bfE_{l},\bfF_{l}$ from intrusive model reduction \eqref{eq:CloseOpStateTrue} with the norms of the inferred operators $\hbfE_{l},\hbfF_{l}$ from stagewise inference. Notice that the norms coincide which is in agreement with Proposition~\ref{prop:StageLinear}.

Consider now batch  inference with trajectories sampled with the extended re-projection algorithms with $\Nreproj = 12$ and $\Treproj \in \{250, 1000, 10000\}$ which correspond to larger training data sets than what is used with $\Nreproj = 25$ and $\Treproj = 102$ for stagewise inference. The data sets generated for increasing values of $\Treproj$ are nested. Since $\Treproj > L + 2$, Proposition~\ref{prop:BatchLinear} does not apply and therefore we do not expect that the operators inferred with batch  inference  coincide with the operators from intrusive model reduction in this setting. This is indicated by the results shown in Figure~\ref{fig:ConvDiff_OpNorm_StageVSBatchDetailed}. Note, however, that the operators obtained by batch inference have norms that closely approximate the norm of the intrusive non-Markovian operators for $l \leq 10$.

\begin{figure}
  \begin{subfigure}[b]{0.45\textwidth}
    \begin{center}
{{\Large\resizebox{1.15\columnwidth}{!}{\input{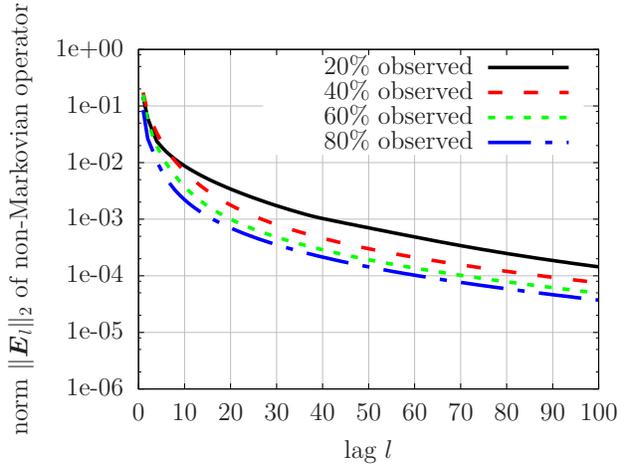}}}}
\end{center}
    \caption{non-Markovian state operator $\bfE_{l}$, dimension $n=10$.}
  \end{subfigure}
  \quad \quad \quad
  \begin{subfigure}[b]{0.45\textwidth}
    \begin{center}
{{\Large\resizebox{1.15\columnwidth}{!}{\input{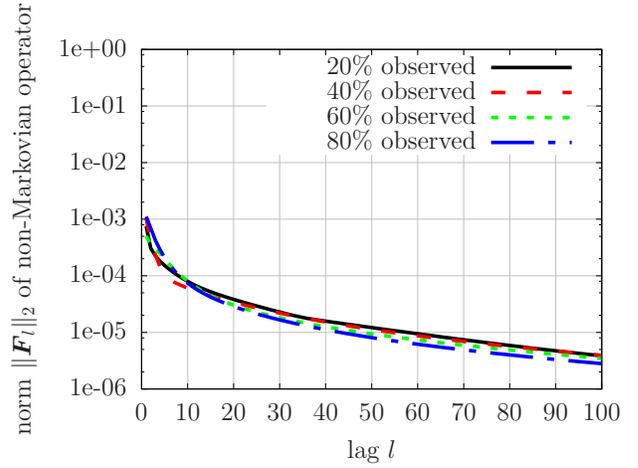}}}}
\end{center}
    \caption{non-Markovian input operator $\bfF_{l}$, dimension $n=10$.}
  \end{subfigure}
  \caption{Convection-diffusion equation (Section~\ref{subsec:Convdiff}). The norm of the non-Markovian operators decays with the lag $l$ and thus supports truncating the non-Markovian term.}
  \label{fig:ConvDiff_StageOpNorm_dim10}
\end{figure}

\begin{figure}
  \begin{subfigure}[b]{0.45\textwidth}
    \begin{center}
{{\Large\resizebox{1.15\columnwidth}{!}{\input{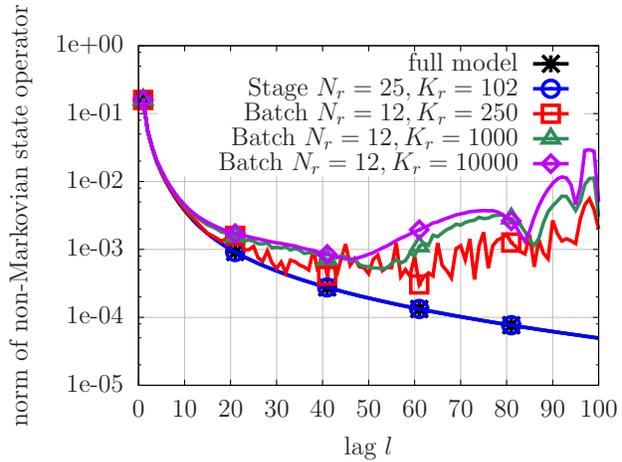}}}}
\end{center}
    \caption{Stage vs batch for $\bfE_{l}$, $n=10$, 60\% observed.}
  \end{subfigure}
  \quad \quad \quad
  \begin{subfigure}[b]{0.45\textwidth}
    \begin{center}
{{\Large\resizebox{1.15\columnwidth}{!}{\input{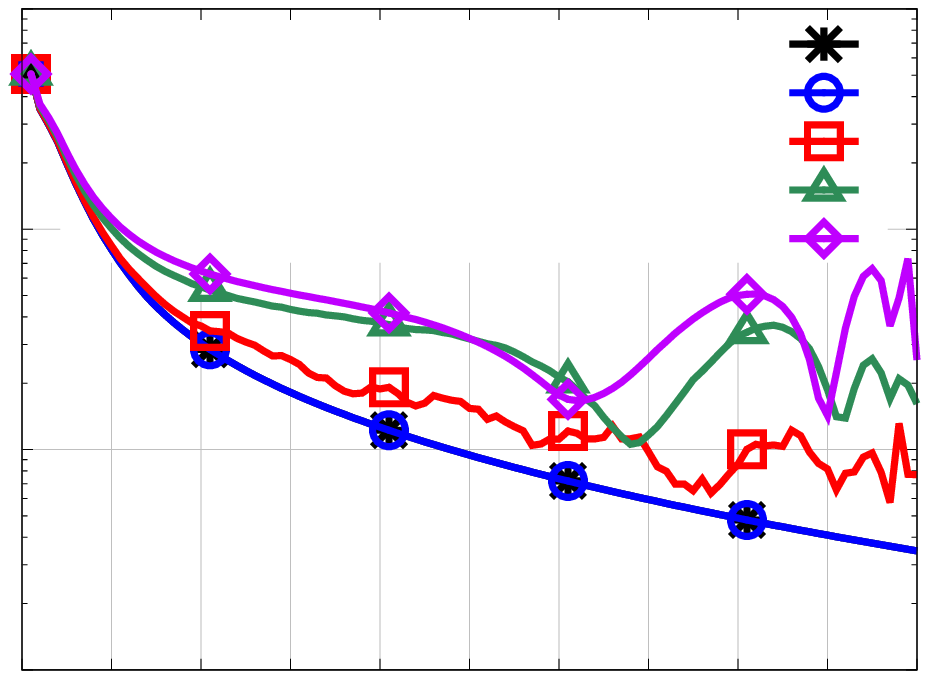}}}}
\end{center}
    \caption{Stage vs batch for $\bfF_{l}$, $n=10$, 60\% observed.}
  \end{subfigure}
  \caption{Convection-diffusion equation (Section~\ref{subsec:Convdiff}). Stagewise operator inference recovers the non-Markovian operators from intrusive model reduction (cf.~Proposition~\ref{prop:StageLinear}) in this example. The batch approach only gives approximations of the non-Markovian operators of intrusive model reduction because $\Treproj > L+2$  (cf.~Proposition~\ref{prop:BatchLinear}).}
  \label{fig:ConvDiff_OpNorm_StageVSBatchDetailed}
\end{figure}

\subsubsection{State error with number of time steps at most the lag of non-Markovian term}

Let $\bfZ_{\text{test}} = [\bfz_0^{\text{test}},\dots,\bfz_{K-1}^{\text{test}}] \in \mathbb{R}^{r \times K}$ be the trajectory of observations under the test input $\bfu^{\text{test}}(t)$ generated with the full model for  $K =  102$ time steps. Note that the number of rows of $\bfZ_{\text{test}}$ depends on the number of observed state components. Figure~\ref{fig:ConvDiff_ProjErr_Detailed} plots the relative projection error 
\begin{align}
\label{eq:ProjError}
       \errStateProj = \frac{\|\bfZ_{\text{test}} - \bfV \bfV^T \bfZ_{\text{test}}\|_F}{ \|\bfZ_{\text{test}}\|_F}
\end{align}
for trajectories $\bfZ_{\text{test}}$ with 20\% and 80\% observed state components with reduced dimensions $n = 4$ and $n = 10$. Additionally, Figure~\ref{fig:ConvDiff_ProjErr_Detailed} plots the relative error of the observations
\begin{equation}
	\errStateStage = \frac{\|\bfZ_{\text{test}} - \bfV \tbfZ_{\text{test}}^{\text{stage}}\|_F}{\|\bfZ_{\text{test}}\|_F}\,
	\label{eq:RelStateStageError}
\end{equation}
where $\tbfZ_{\text{test}}^{\text{stage}} \in \R^{n \times K}$ is the trajectory of observations computed with the learned reduced model with non-Markovian operators obtained with stagewise inference and test input $\bfu^{\text{test}}(t)$. Note that the observation error $\errStateStage$ depends on the number of observed state components, the dimension $n$ of the reduced space, and the lag $L$. Stagewise inference recovers the non-Markovian operators from intusive model reduction in this example (cf.~Proposition~\ref{prop:StageLinear}) and thus the relative observation error \eqref{eq:RelStateStageError} of the learned reduced model equals the projection error if the lag $L$ of the non-Markovian term satisfies $L \geq K - 2$. This corresponds to model \eqref{eq:ResSoln} without truncation. 

We now set $L = 100$. Figure \ref{fig:ConvDiff_ProjErr_Condensed} shows the absolute difference  $|\errStateStage - \errStateProj|$ between the relative stagewise observation error \eqref{eq:RelStateStageError} and the projection error \eqref{eq:ProjError} after $K = L + 2$ time steps for 20\%, 40\%, 60\%, and 80\% observed components. Define the analog of \eqref{eq:RelStateStageError} for batch operator inference as
\begin{equation}
	\errStateBatch = \frac{\|\bfZ_{\text{test}} - \bfV \tbfZ_{\text{test}}^{\text{batch}}\|_F}{\|\bfZ_{\text{test}}\|_F}\,
	\label{eq:RelStateBatchError}
\end{equation}
where $\tbfZ_{\text{test}}^{\text{batch}} \in \mathbb{R}^{n \times K}$ is the trajectory of observations computed with the reduced model whose non-Markovian operators are learned simultaneously. 
Also shown in Figure~\ref{fig:ConvDiff_ProjErr_Condensed} is the absolute difference  $|\errStateBatch - \errStateProj|$ between the relative batch observation error \eqref{eq:RelStateBatchError} and the projection error \eqref{eq:ProjError}. The results demonstrate that the reduced model with non-Markovian terms obtained with stagewise operator inference in this example achieves a state error \eqref{eq:RelStateStageError} that equals, up to numerical errors, the projection error \eqref{eq:ProjError}. Meanwhile, batch inference in this situation does not recover the  non-Markovian operators of intrusive model reduction and thus the corresponding difference between observation \eqref{eq:RelStateBatchError} and projection error \eqref{eq:ProjError} is higher than with stagewise inference in this example.

\begin{figure}
  \begin{subfigure}[b]{0.45\textwidth}
    \begin{center}
{{\Large\resizebox{1.15\columnwidth}{!}{\input{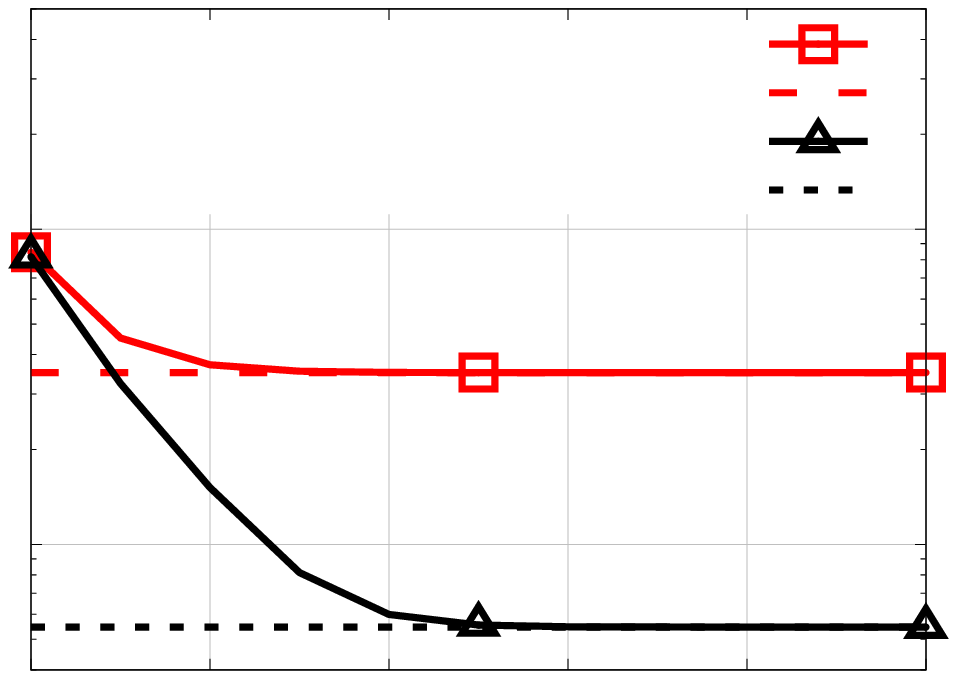}}}}
\end{center}
    \caption{Stagewise OpInf error, 20$\%$ observed components}
  \end{subfigure}
  \quad \quad \quad
  \begin{subfigure}[b]{0.45\textwidth}
    \begin{center}
{{\Large\resizebox{1.15\columnwidth}{!}{\input{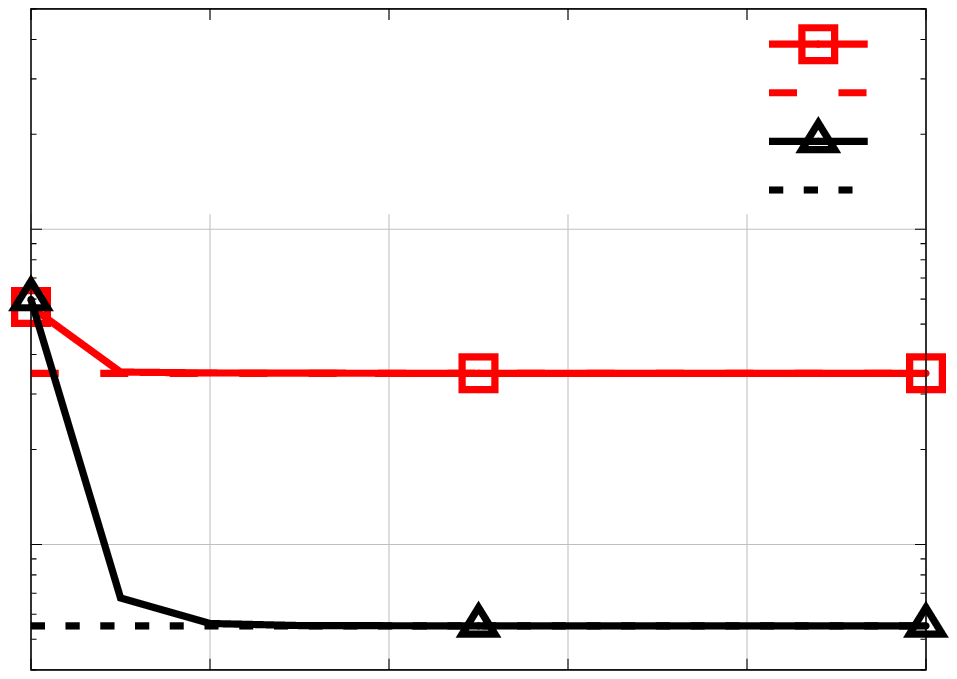}}}}
\end{center}
    \caption{Stagewise OpInf error, 80$\%$ observed components}
  \end{subfigure}
  \caption{Convection-diffusion equation (Section~\ref{subsec:Convdiff}). The projection error $\errStateProj$ \eqref{eq:ProjError} and the stagewise observation error $\errStateStage$ \eqref{eq:RelStateStageError} are equal at the maximal lag $L=100$ since $L = K-2$.
   }
  \label{fig:ConvDiff_ProjErr_Detailed}
\end{figure}

\begin{figure}
  \begin{subfigure}[b]{0.45\textwidth}
    \begin{center}
{{\Large\resizebox{1.15\columnwidth}{!}{\input{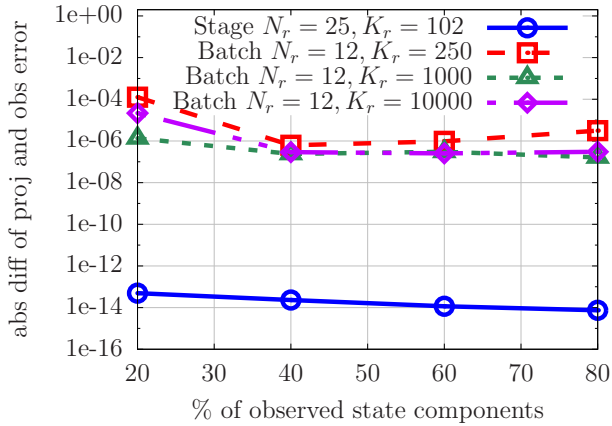}}}}
\end{center}
    \caption{Stage vs batch error, $n=8$.}
  \end{subfigure}
  \quad \quad \quad
  \begin{subfigure}[b]{0.45\textwidth}
    \begin{center}
{{\Large\resizebox{1.15\columnwidth}{!}{\input{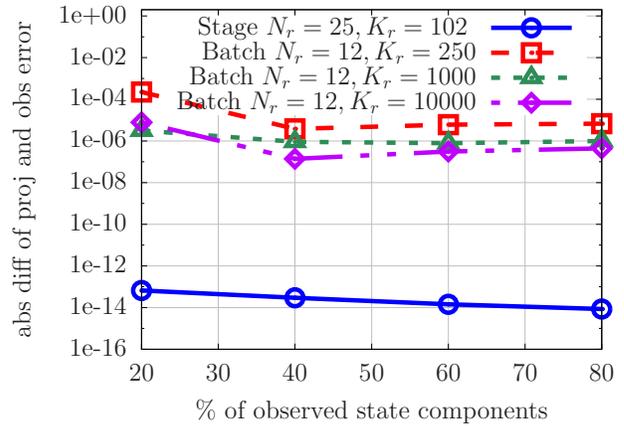}}}}
\end{center}
    \caption{Stage vs batch error, $n=10$.}
  \end{subfigure}
  \caption{Convection-diffusion equation (Section~\ref{subsec:Convdiff}). If the lag of the non-Markovian term satisfies $L=K-2$, the observation error corresponding to stagewise operator inference coincides with the projection error.  Since the batch inference does not recover the non-Markovian operators of intrusive model reduction in this example, the batch observation error differs from the projection error.}
  \label{fig:ConvDiff_ProjErr_Condensed}
\end{figure}

\subsubsection{State error with number of time steps larger than lag of non-Markovian term} \label{subsubsec:ConvDiffMoreStepsThanLag}

We now consider $K = 50001$ time steps and a lag $L \ll K$ so that the observation error of a reduced model obtained with stagewise  inference does not necessarily coincide with the projection error \eqref{eq:ProjError}. Figure \ref{fig:ConvDiff_StateErr_Detailed} shows the observation errors \eqref{eq:RelStateStageError}  and \eqref{eq:RelStateBatchError} for stagewise and batch inference for 20\% observed states with dimension $n = 10$ (left panel) and 80\% observed states with dimension $n = 4$ (right panel). Batch inference with $\Nreproj = 12$ and $\Treproj = 250$ leads to a model with a large observation error $\errStateBatch$ near $L = 20$ for 20\% observed states and dimension $n = 10$; this is consistent with the graphs of Figure \ref{fig:ConvDiff_OpNorm_StageVSBatchDetailed} that report that the norm of the learned operators can differ significantly from the norm of the intrusive operators. However, if the number of time steps $\Treproj$ sampled from the full model with re-projection is increased to $\Treproj = 1000$ and $\Treproj = 10000$, the non-Markovian models obtained with batch inference outperform stagewise inference in this example. While stagewise inference is confined to recovering the non-Markovian operators of intrusive model reduction, batch inference offers more flexibility that can lead to a non-Markovian model that achieves lower errors than the stagewise approach. The right panel of Figure~\ref{fig:ConvDiff_StateErr_Detailed} shows results for 80\% observed state components and dimension $n=4$ where batch inference leads to a non-Markovian model whose observation error does not improve as the lag $L$ is increased. In contrast, stagewise operator inference  yields a non-Markovian model that eventually attains a lower error with increasing lag $L$.

We now fix the lag to $L = 100$ and consider the relative observation errors shown in Figure~\ref{fig:ConvDiff_StateErr_Condensed} over the number of observed components. Here, batch inference was also applied to trajectories sampled with the extended re-projection algorithm with $\Nreproj = 25$ and $\Treproj = \{500,1000,10000\}$ which represent nested training data sets. The results demonstrate that batch inference of the operators seems to outperform stagewise learning if training is done on a sufficiently large data set. In terms of computational cost, stagewise uses $\Nreproj \times \Treproj = 25 \times 102$ data points whereas for batch  inference, up to $\Nreproj \times \Treproj = 25 \times 10000$ data points are used.

\begin{figure}
  \begin{subfigure}[b]{0.45\textwidth}
    \begin{center}
{{\Large\resizebox{1.15\columnwidth}{!}{\input{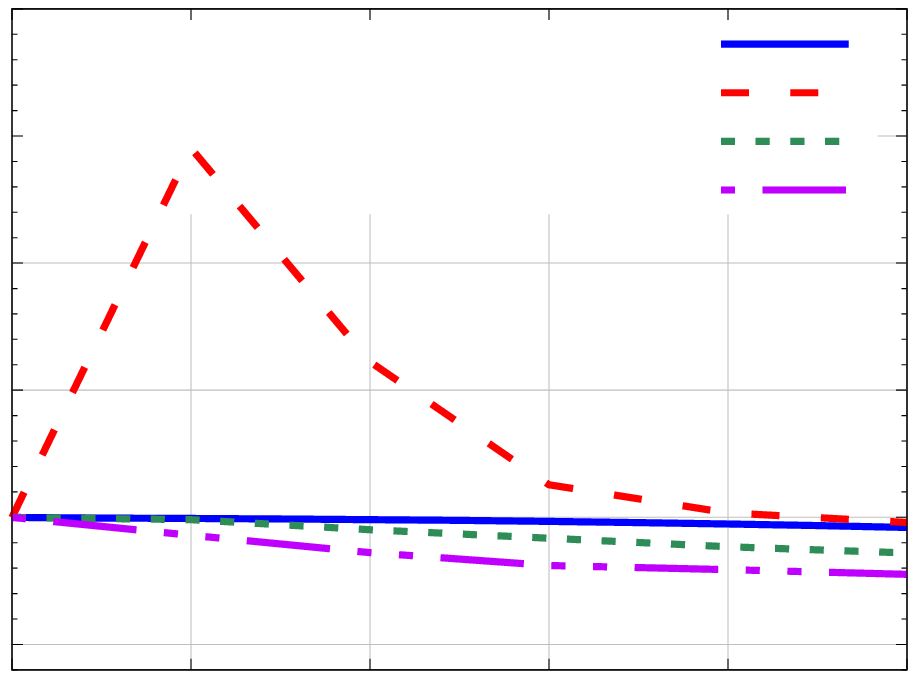}}}}
\end{center}
    \caption{$n = 10, 20\%$ observed components}
  \end{subfigure}
  \quad \quad \quad
  \begin{subfigure}[b]{0.45\textwidth}
    \begin{center}
{{\Large\resizebox{1.15\columnwidth}{!}{\input{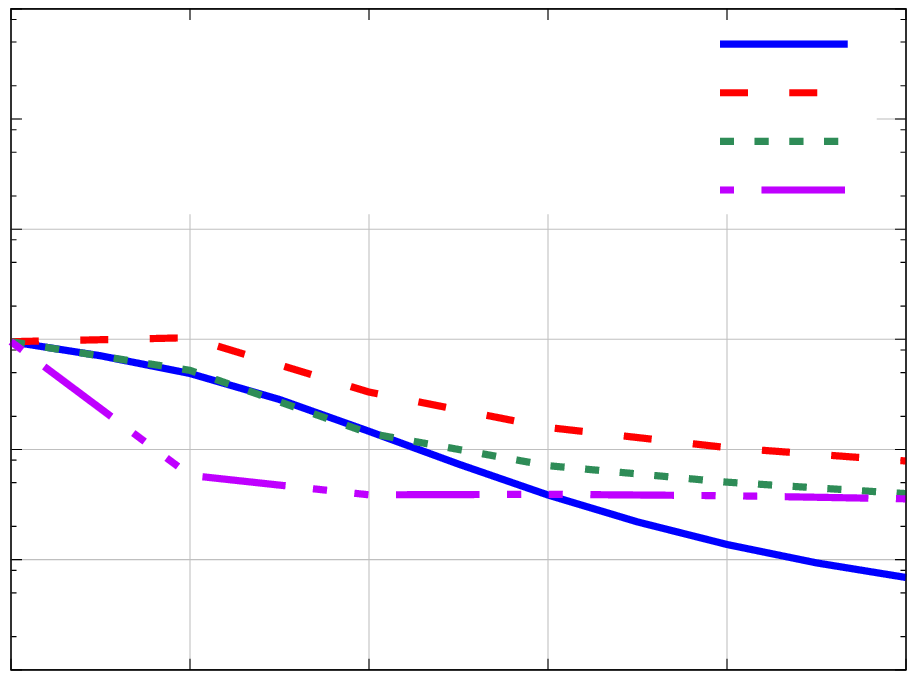}}}}
\end{center}
    \caption{$n = 4, 80\%$ observed components}
  \end{subfigure}
  \caption{Convection-diffusion equation (Section~\ref{subsec:Convdiff}). The non-Markovian model trained via batch inference may produce unstable dynamics (left) or result in a model whose observation error stagnates despite an increase in the lag $L$ (right).}
  \label{fig:ConvDiff_StateErr_Detailed}
\end{figure}

\begin{figure}
  \begin{subfigure}[b]{0.45\textwidth}
    \begin{center}
{{\Large\resizebox{1.15\columnwidth}{!}{\input{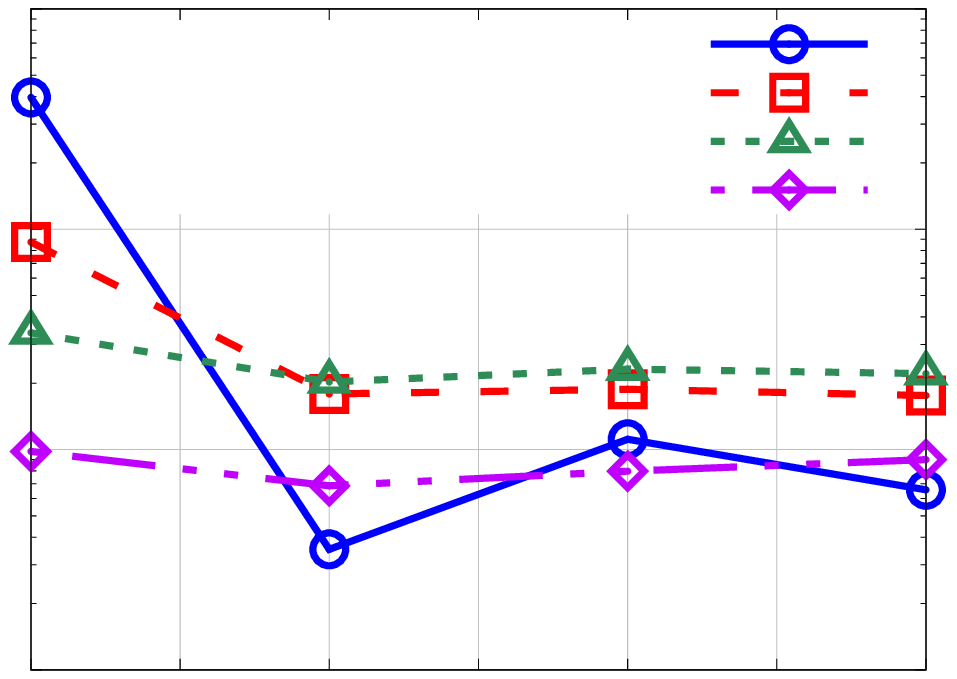}}}}
\end{center}
    \caption{Batch OpInf trained with $\Nreproj = 25, n = 10$}
  \end{subfigure}
  \quad \quad
  \begin{subfigure}[b]{0.45\textwidth}
    \begin{center}
{{\Large\resizebox{1.15\columnwidth}{!}{\input{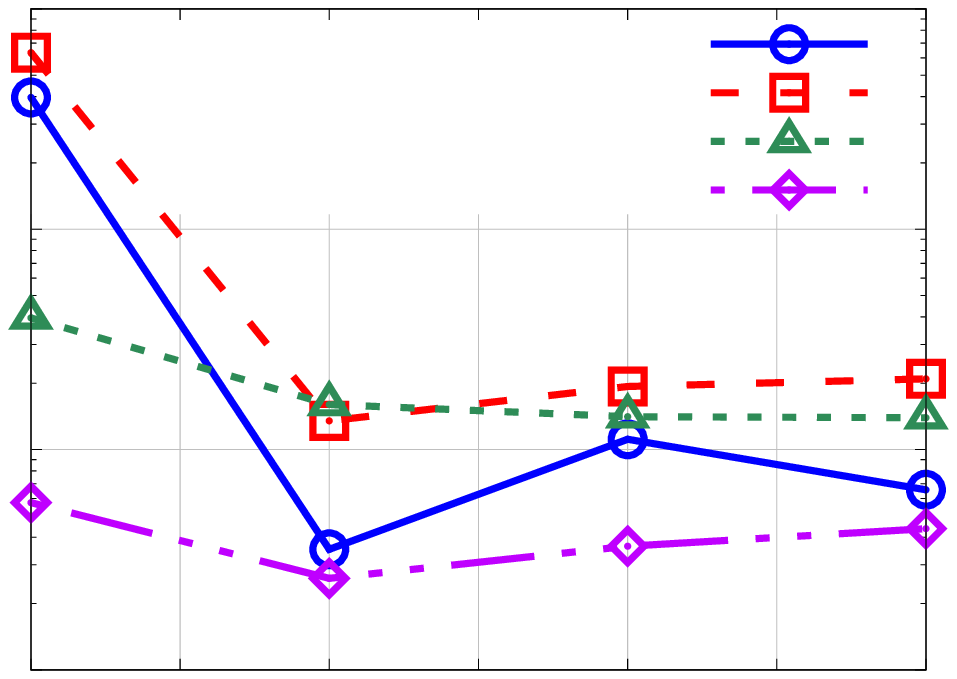}}}}
\end{center}
    \caption{Batch OpInf trained with $\Nreproj = 12, n = 10$}
  \end{subfigure}
  \caption{Convection-diffusion equation (Section~\ref{subsec:Convdiff}). Stagewise inference achieves a smaller observation error than batch inference in this example except when the latter is trained on data sets with sufficiently large $\Treproj$.}
  \label{fig:ConvDiff_StateErr_Condensed}
\end{figure}

\subsubsection{Output error}

Consider now the output trajectory $\bfY_{\text{test}}$ of the full model with the test input trajectory and let $\tbfY_{\text{test}}^{\text{Markovian}}$ be the output trajectory obtained with a Markovian reduced model learned via non-intrusive model reduction from partially observed state trajectories, i.e. the lag of the non-Markovian term is $L=0$. Denote by $\tbfY_{\text{test}}^{\text{stage}}$ the output trajectory computed with the non-Markovian reduced model learned via stagewise  inference. Note that $\tbfY_{\text{test}}^{\text{stage}}$ depends on the lag $L$, the dimension $n$ of the reduced space, and the number of observed state components. Figure~\ref{fig:ConvDiff_Output_Detailed} shows the output trajectory of the full model $\bfY_{\text{test}}$, the trajectory $\tbfY_{\text{test}}^{\text{Markovian}}$ of the Markovian reduced model ($L = 0$), and the trajectory $\tbfY_{\text{test}}^{\text{stage}}$ of the non-Markovian reduced model with stagewise operator inference and lag $L = 100$ for $n = 8$ and 60\% observed state components (left panel) and for $n=10$ and 20\% observed state components (right panel). The learned non-Markovian reduced model provides a more accurate approximation of the full model output than the traditional Markovian reduced model.

Consider now the relative output error
\begin{equation}
    \errOut (\tbfY) = \frac{\|\bfY_{\text{test}} - \tbfY\|_F}{\|\bfY_{\text{test}}\|_F}
    \label{eq:OutErr}
\end{equation}
where $\tbfY$ is a trajectory from a reduced model for the output. Figure \ref{fig:ConvDiff_Output_Condensed} shows the output error \eqref{eq:OutErr} of the trajectory computed with the Markovian reduced model $(\tbfY = \tbfY_{\text{test}}^{\text{Markovian}})$  and compares it to the  error \eqref{eq:OutErr} of the trajectories computed with the stagewise learned non-Markovian reduced model with lag $L = 100$ $(\tbfY = \tbfY_{\text{test}}^{\text{stage}})$ for 20\%, 40\%, 60\%, and 80\% observed state components and dimension $n = 8$ (left panel) and $n = 10$ (right panel). The non-Markovian reduced model achieves errors of orders of magnitude lower than the Markovian reduced model in this example. 

If instead 99\% of the state components of the full model were observed, the approximation quality of the learned Markovian reduced model for the output improves as depicted in Figure~\ref{fig:ConvDiff_Output_PO99} for $n=10$ (left panel) and $n=8$ (right panel). We also emphasize that, depending on the nature of the output, it is still possible for the learned Markovian model to provide a poor approximation despite increasing the rate of observed components to 99\%. This underscores the benefits of incorporating a non-Markovian term to the reduced model. 

\begin{figure}
    \begin{subfigure}[b]{0.45\textwidth}
    \begin{center}
{{\Large\resizebox{1.15\columnwidth}{!}{\input{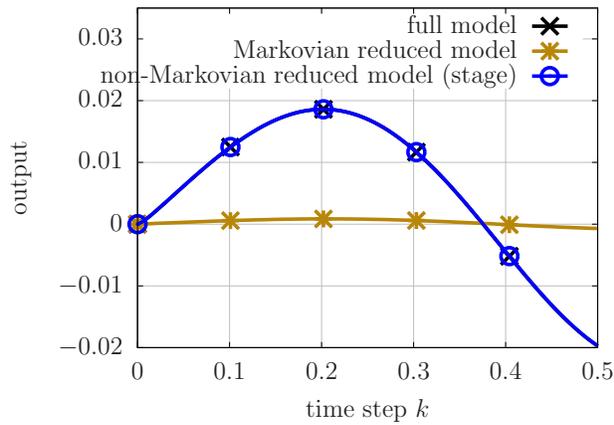}}}}
\end{center}
    \caption{$n=8, 60\%$ observed components}
  \end{subfigure}
  \quad \quad \quad
  \begin{subfigure}[b]{0.45\textwidth}
    \begin{center}
{{\Large\resizebox{1.15\columnwidth}{!}{\input{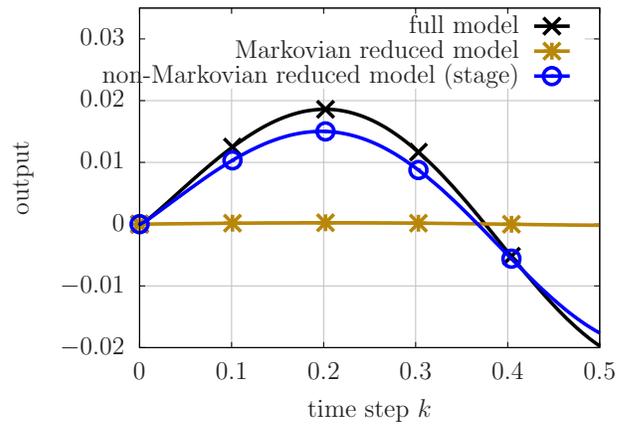}}}}
\end{center}
    \caption{$n=10, 20\%$ observed components}
  \end{subfigure}
  \caption{Convection-diffusion equation (Section~\ref{subsec:Convdiff}).  The correction terms in the non-Markovian reduced model improve the approximation offered by the Markovian reduced model.}
  \label{fig:ConvDiff_Output_Detailed}
\end{figure}

\begin{figure}
  \begin{subfigure}[b]{0.45\textwidth}
    \begin{center}
{{\Large\resizebox{1.15\columnwidth}{!}{\input{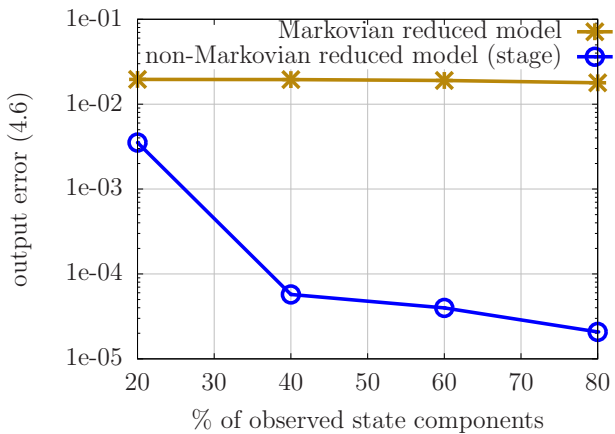}}}}
\end{center}
    \caption{output, $n=8$}
  \end{subfigure}
  \quad \quad \quad
  \begin{subfigure}[b]{0.45\textwidth}
    \begin{center}
{{\Large\resizebox{1.15\columnwidth}{!}{\input{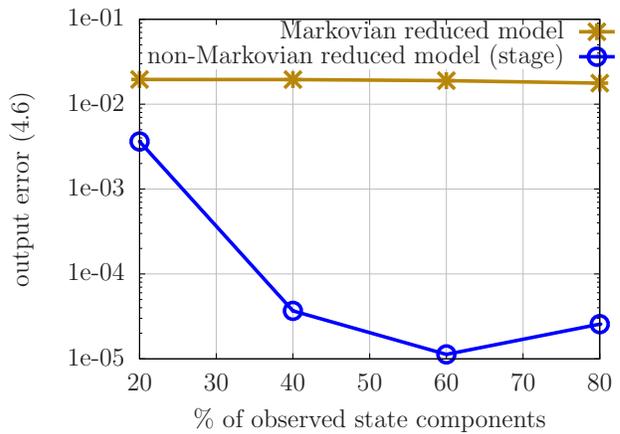}}}}
\end{center}
    \caption{output, $n=10$}
  \end{subfigure}
  \caption{Convection-diffusion equation (Section~\ref{subsec:Convdiff}). The non-Markovian reduced model for the output achieves errors which are orders of magnitude lower than the Markovian model for this example.}
  \label{fig:ConvDiff_Output_Condensed}
\end{figure}

\begin{figure}
\begin{subfigure}[b]{0.45\textwidth}
    \begin{center}
{{\Large\resizebox{1.15\columnwidth}{!}{\input{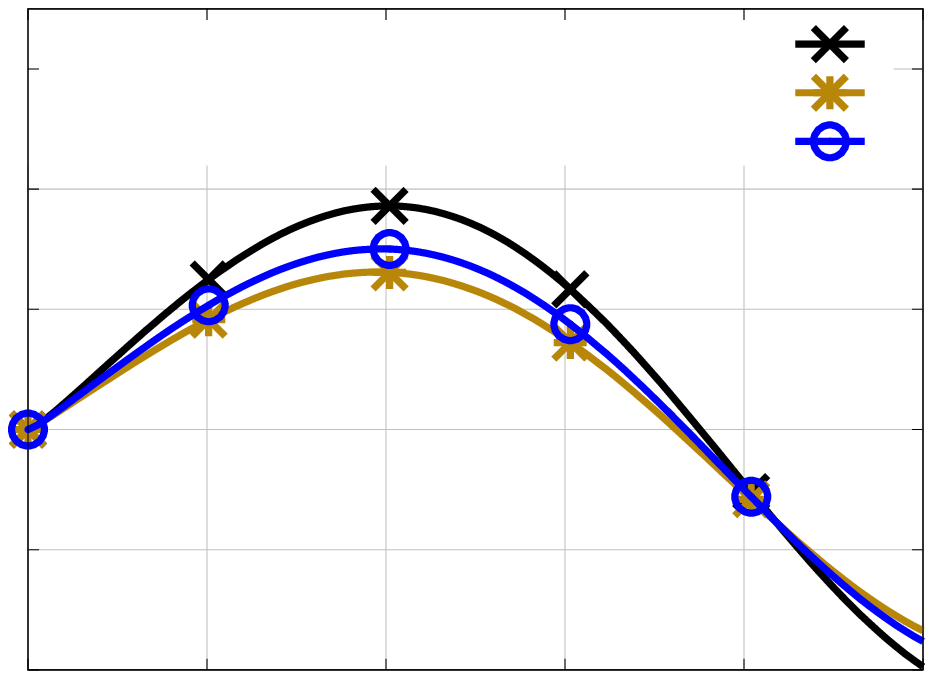}}}}
\end{center}
    \caption{$n=10$, 99\% observed components}
  \end{subfigure}
  \quad \quad \quad
\begin{subfigure}[b]{0.45\textwidth}
    \begin{center}
{{\Large\resizebox{1.15\columnwidth}{!}{\input{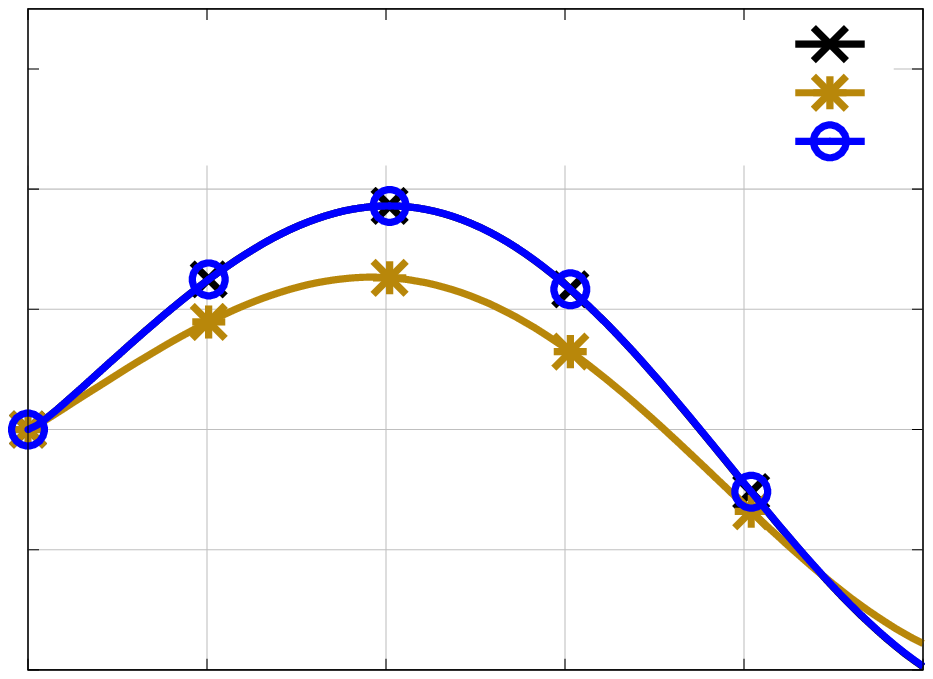}}}}
\end{center}
    \caption{$n=8$, 99\% observed components}
  \end{subfigure}
  \caption{Convection-diffusion equation (Section~\ref{subsec:Convdiff}). A Markovian reduced model needs up to 99\% of the components of the states to be observed to achieve comparable accuracy as a non-Markovian reduced model learned from only 20\% observed components in this example.}
  \label{fig:ConvDiff_Output_PO99}
\end{figure}

\subsection{Diffusion-reaction equation} \label{subsec:react2d}

We now learn a non-Markovian reduced model of a parametric nonlinear polynomial system arising from
the diffusion-reaction example in \cite{paper:Peherstorfer2019}. Set the spatial domain as $\Omega = (0,1) \times (0,1)$ with boundary $\partial \Omega$, the time domain as $t \in (0,100)$, and the parameter domain as $\mu \in \paramSpace= [1,1.5]$.  For $(\xi_1,\xi_2,t) \in \Omega \times (0,100)$ and $\mu \in \paramSpace$, the PDE is described by 
\begin{align} \label{eq:react2d}
        \frac{\partial}{\partial t}x(\xi_1,\xi_2,t ; \mu) & = - \Delta x(\xi_1,\xi_2,t;\mu) + (\sin(2 \pi \xi_1) \sin(2 \pi \xi_2)/10)u(t) + g(x(\xi_1,\xi_2,t;\mu);\mu),\\
        \nabla x(\xi_1,\xi_2,t;\mu) \cdot \mathbf{n} & = 0 \quad \text{for } (\xi_1,\xi_2) \in \partial \Omega, \notag \\ 
        x(\xi_1,\xi_2,0; \mu) & = 0 \notag
\end{align}
where $u(t)$ is the input and  $g$ is defined by $g(x;\mu) = -(0.1 \sin(\mu)+2)e^{-2.7\mu^2 }(1 + 1.8\mu x  + 1.62 \mu^2 x^2)$. The PDE  \eqref{eq:react2d} is spatially discretized on a grid with 64 equidistant points in $\xi_1$ and $\xi_2$ via the finite difference method. It is then temporally discretized with forward Euler using time step size of $\delta t = 10^{-2}$ to obtain the paremetric full model
    \begin{align} \label{eq:React2DFOMMatrix}
        \bfx_{k+1}(\mu) &= \bfA_1(\mu) \bfx_k(\mu)  + \bfA_2(\mu) \bfx_k^2(\mu) + \bfA_3(\mu) \bfx_k^3(\mu) + \bfB u_k , \quad k = 0,\dots,K-1,
    \end{align}
    with $\bfx_k(\mu) \in \R^{N}, N=64^2$ and $u_k \in \R$.

The inputs for basis generation, training, and testing are independently sampled at each time step from a uniform distribution on $[-3,996]$. The parameters for basis generation and training are $\mu_i^{\text{train}} = 1 + (i-1)/18$ for $i = 1, \dots, 10$ while the test parameters are  $\mu_i^{\text{test}} = 1 + (i-1)/16$ for $i = 1, \dots, 9$. The Markovian and non-Markovian operators are interpolated in the testing phase as described in Section~\ref{subsec:TradMOR} and \cite{paper:DegrooteVW2009}. For each training parameter $\mu_1^{\text{train}}, \dots, \mu_{10}^{\text{train}}$, we generate 20 trajectories with the extended re-projection algorithm with a total of $\Nreproj$ re-projection steps and $\Treproj$ time steps per re-projection step; each trajectory corresponds to different realizations of the input random variable. For fixed $\Nreproj$, a fair comparison across different data sets is aimed for by nesting the data sets with respect to increasing $\Treproj$. The lag $L = 40$ and $\Nreproj, \Treproj$ are always chosen so that the resulting data matrices for the least squares problems are numerically full rank.

\subsubsection{State error}

Consider the projection error averaged over $N_{\mu}$ parameters
\begin{align} \label{eq:ProjErrorParam}
    e^{\text{proj}}_{\mu} = \frac{1}{N_{\mu}}\sum_{i = 1}^{N_{\mu}} \frac{\|\bfZ(\mu_i) - \bfV\bfV^T\bfZ(\mu_i)\|_F}{\|\bfZ(\mu_i)\|_F}\,
\end{align}
where $\bfZ(\mu_i)$ is an observation trajectory for fixed parameter value $\mu_i$. Analogously, define the observation error as
\begin{align} \label{eq:StateErrParam}
        e^{\text{obs}}_{\mu}(\tbfZ)=\frac{1}{N_{\mu}}\sum_{i=1}^{N_{\mu}} \frac{\|\bfZ(\mu_i) - \bfV \tbfZ(\mu_i)\|_F}{\|\bfZ(\mu_i)\|_F}
\end{align}    
where $\tbfZ(\mu_i)$ is an observation trajectory obtained with a learned reduced model for parameter $\mu_i$. The projection error $e^{\text{proj}}_{\mu}$ depends on the dimension $\nr$ of the reduced space $\Vcal$, the number of observed state components $r$, and whether the average is taken over  the training parameters $\{\mu_i^{\text{train}}\}_{i = 1}^{10}$ or the test parameters $\{\mu_i^{\text{test}}\}_{i = 1}^9$. The observation error \eqref{eq:StateErrParam} additionally depends on all parameters that the reduced model depends on such as the lag $L$ and the data set used to train the reduced model.

Figure~\ref{fig:React2D_StateErr_StageVsBatch_ReprojMore} shows the projection error \eqref{eq:ProjErrorParam} and the observation error \eqref{eq:StateErrParam} over the test parameters and inputs for dimensions $n = 8$ (left panel) and $n = 10$ (right panel). Stagewise  inference is applied to data sets derived from re-projection with $\Nreproj = 100$ re-projection steps and $\Treproj = 42$ time steps per re-projection step; batch  inference is applied to data sets with $\Treproj = 42, 60, 80$ time steps per re-projection step. Recall that these data sets are nested. The results reported in Figure~\ref{fig:React2D_StateErr_StageVsBatch_ReprojMore} show that a non-Markovian reduced model learned with batch inference can achieve a lower error if the data set is sufficiently large, which is in agreement with the results reported for the linear full model in Section~\ref{subsubsec:ConvDiffMoreStepsThanLag}. Notice that if $\Treproj = 42$, both modes of operator inference are applied to the same training data. Missing markers indicate that models led numerically to NaN (Not a Number) after a finite number of time steps, e.g., for the non-Markovian model obtained with batch inference from 20\% observed states and $n = 8$ dimensions. Unlike in the linear case, there is no guarantee here that either mode of operator inference is able to recover the non-Markovian operators from intrusive model reduction because the full model is nonlinear polynomial and we  consider linear non-Markovian terms only (cf.~Section~\ref{subsec:Nonlinear}). 

Figure \ref{fig:React2D_StateErr_StageVsBatch_ReprojLess_dim10} shows the projection and observation error for dimension $n=10$ with re-projection parameters $\Nreproj = 12,\Treproj \in \{100,500,2500\}$ for batch inference. The left panel shows the errors over the training parameters and inputs while the right panel shows the errors over the test parameters and inputs. The results show that batch inference achieves an up to one order of magnitude lower test error than stagewise  inference in this experiment. This is because batch inference is trained on longer trajectories (larger $\Treproj$ for fixed $\Nreproj$) unlike stagewise inference which is constrained to utilize data only up to $\Treproj = L+2$ time steps if the non-Markovian term has lag $L$. Notice also that batch  inference starts to overfit for $\Treproj = 2500$, which we think is because the full model has reached steady state by 2500 time steps and so the additional training data provided by increasing the number of time steps $\Treproj$ per re-projection step is skewing the least squares problem towards the steady state behavior of the full model present in the training data set. 

\begin{figure}
  \begin{subfigure}[b]{0.45\textwidth}
    \begin{center}
{{\Large\resizebox{1.15\columnwidth}{!}{\input{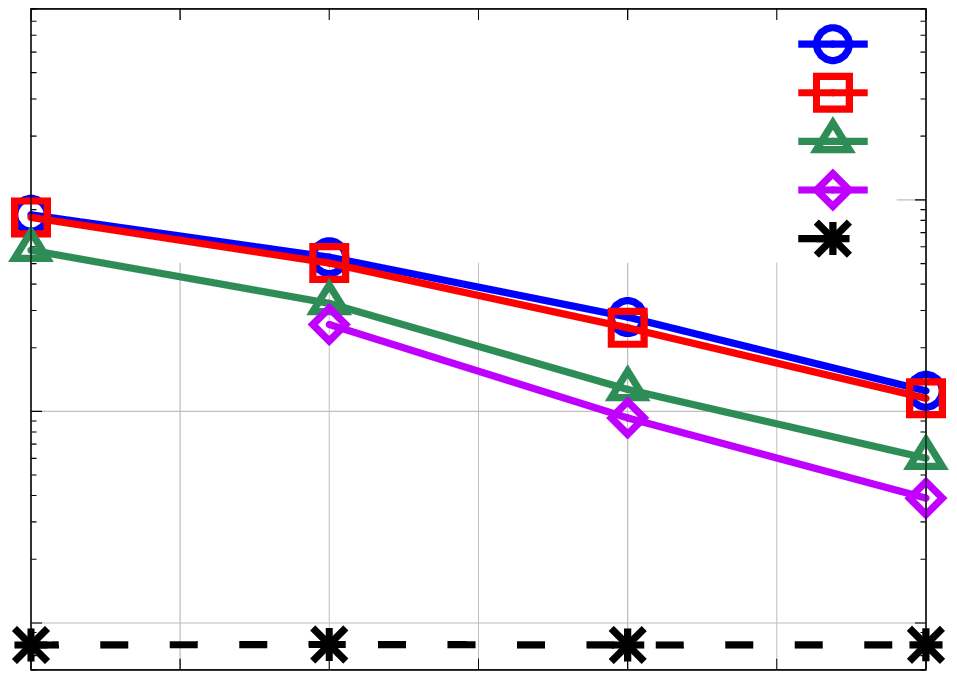}}}}
\end{center}
    \caption{Stage vs batch error, $ n=8$}
  \end{subfigure}
  \quad \quad \quad
  \begin{subfigure}[b]{0.45\textwidth}
    \begin{center}
{{\Large\resizebox{1.15\columnwidth}{!}{\input{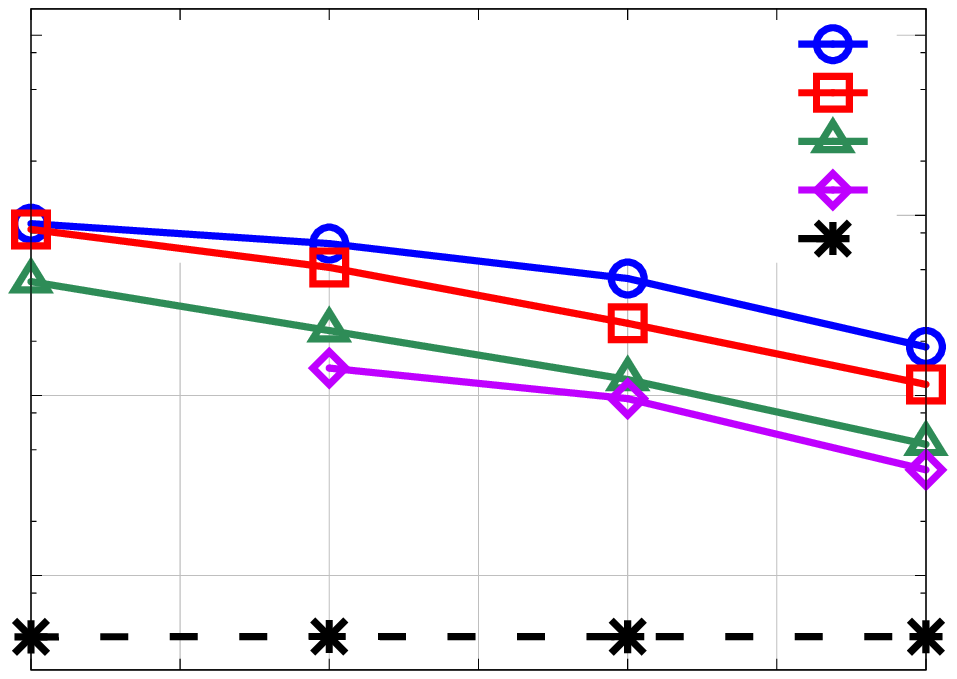}}}}
\end{center}
    \caption{Stage vs batch error, $n=10$}
  \end{subfigure}
  \caption{Diffusion-reaction equation (Section~\ref{subsec:react2d}). For nonlinear polynomial full models, neither mode of operator inference necessarily recovers the non-Markovian operators of intrusive model reduction. In this example, for a fixed number of re-projection steps, batch inference can achieve a lower error if the trajectories in the training data are sufficiently long.}
  \label{fig:React2D_StateErr_StageVsBatch_ReprojMore}
\end{figure}

\begin{figure}
  \begin{subfigure}[b]{0.45\textwidth}
    \begin{center}
{{\Large\resizebox{1.15\columnwidth}{!}{\input{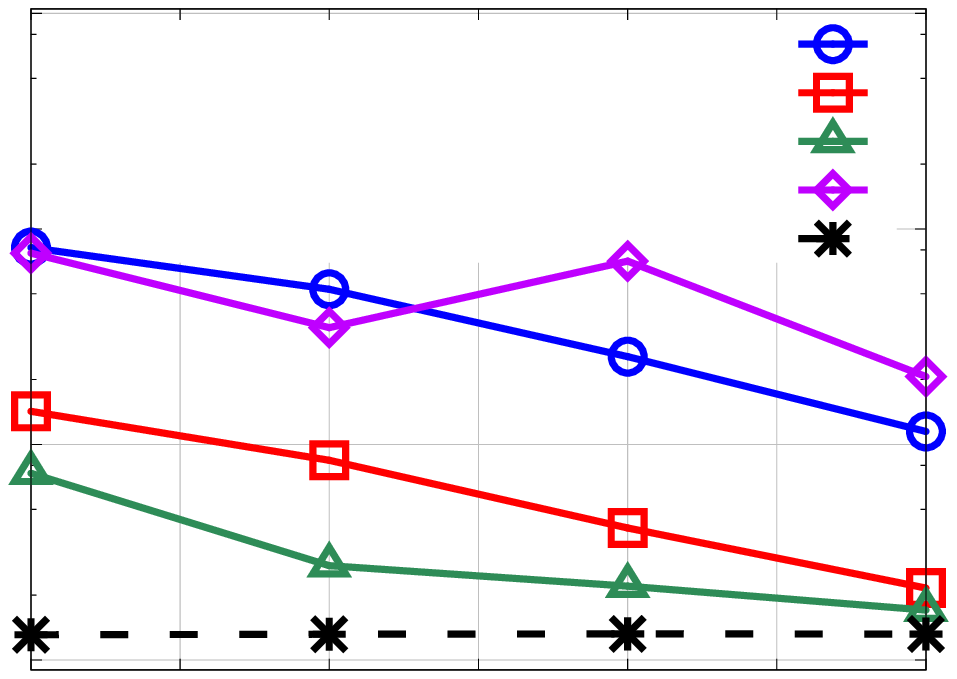}}}}
\end{center}
    \caption{Train error, $n =10$}
  \end{subfigure}
  \quad \quad \quad
  \begin{subfigure}[b]{0.45\textwidth}
    \begin{center}
{{\Large\resizebox{1.15\columnwidth}{!}{\input{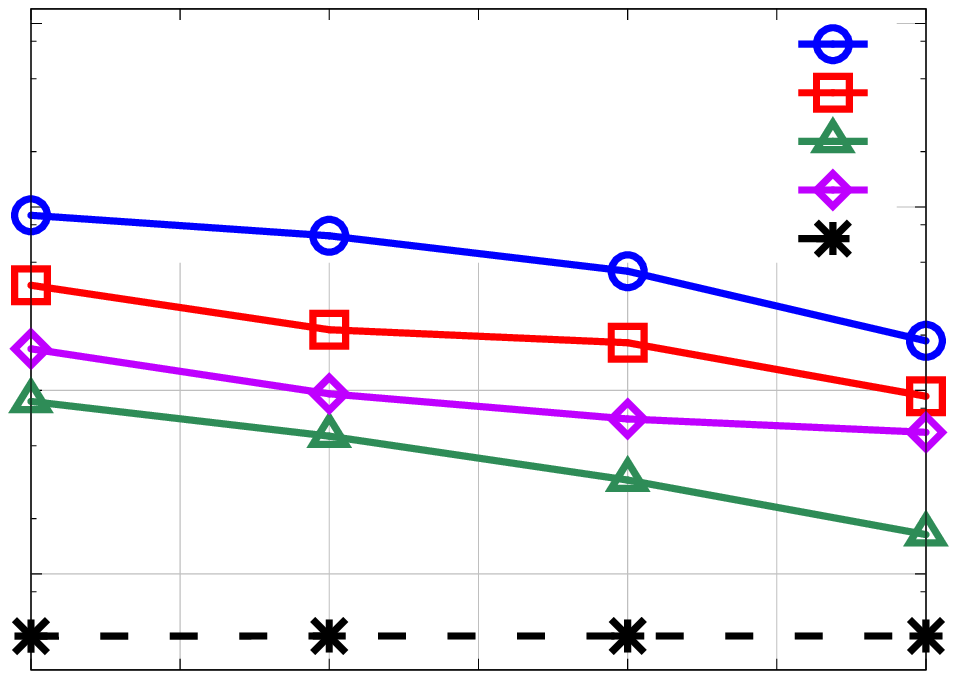}}}}
\end{center}
    \caption{Test error, $n = 10$}
  \end{subfigure}
  \caption{Diffusion-reaction equation (Section~\ref{subsec:react2d}). Batch inference is flexible in that it can be applied to training data with long trajectories (large $\Treproj$) to learn a non-Markovian reduced model with lag $L$ in contrast to stagewise inference which only utilizes data up to $\Treproj = L+2$ time steps. In this example, this results in a non-Markovian model learned with batch inference that achieves lower errors than its stagewise counterpart.}
  \label{fig:React2D_StateErr_StageVsBatch_ReprojLess_dim10}
\end{figure}

\subsection{Chafee-Infante equation}
\label{subsec:Chafee}

Consider the Chafee-Infante equation on the spatial domain $\Omega = (0,1)$ and time domain $t \in (0,4)$. For $(\xi,t) \in \Omega \times (0,4)$, if $u(t)$ is the input, the PDE satisfies
\begin{align} \label{eq:chafeePDE}
        \frac{\partial}{\partial t} x(\xi,t) & = \frac{\partial^2}{\partial \xi^2} x(\xi,t) - x^3(\xi,t) + x(\xi,t), \\
        \frac{\partial }{\partial \xi} x(1,t) & = 0, \notag \\
        x(0,t) & = u(t),  \notag \\
        x(\xi,0) & = 0. \notag
\end{align}
The quantity of interest is modeled as $y(t) = x(1,t)$ which is the solution at the boundary point $\xi = 1$. An approximation to the solution of \eqref{eq:chafeePDE} is sought at 128 equidistant spatial nodes in $(0,1)$ via the finite difference method. Forward Euler is then employed to discretize the PDE temporally with $\delta t = 10^{-5}$. This gives the full model
\begin{align} \label{eq:chafeeFOM}
        \bfx_{k+1} & = \bfA_1 \bfx_k +  \bfA_3 \bfx_k^3 + \bfB u_k, \qquad k = 0,\dots,K-1, \\
        y_{k+1} & = \bfC \bfx_{k+1} \notag
\end{align}
with $\bfx_k \in \R^{N}, N = 128$, and $u_k, y_k \in \R$.

The basis matrix $\bfV$ for the observations is computed with inputs sampled from a uniform distribution on $[0,10]$. The training data set is generated as follows. First, trajectories $i = 1, \dots, 15$  are obtained with re-projection using the inputs  $u_{k,i}^{\text{train}} = 10 \theta_k^{(i)} (\cos(50 \pi k \delta t \gamma_k^{(i)} ) + 1)$ for the $i$-th trajectory, where $\theta_k^{(i)}, \gamma_k^{(i)}$ are realizations of a random variable with uniform distribution on $[0, 1]$. Second, training trajectories $i = 16, \dots, 30$ are obtained with inputs that are realizations of a uniform distribution on $[0, 10]$. The lag is set to $L = 60$ and the non-Markovian operators are then learned as a batch with $\Nreproj = 600$ (20 re-projection steps per input) and  $\Treproj = 100$, and in a stagewise manner on the subset generated by the $\Treproj = 62$ time steps. The test input is $u^{\text{test}}(t) = 5 (\sin (\pi t) + 1)$.

\subsubsection{Output error}

Consider the output error  defined analogously as \eqref{eq:OutErr} for the non-Markovian reduced model obtained with batch and stagewise inference; see Figure~\ref{fig:Chafee_Output_Condensed}. The results indicate that a Markovian reduced model fails to make predictions about the full model output when one only observes partial components of the state. In contrast, the proposed non-Markovian reduced models learned with stagewise or batch inference achieve more than one order of magnitude improvement compared to the Markovian reduced model.

Figure \ref{fig:Chafee_Output_Detailed} compares the full model outputs over time with the approximations given by the Markovian and the non-Markovian reduced models learned from 40\%, 60\%, and 80\% observed state components. In agreement with the results shown in Figure~\ref{fig:ChafeeProbStatement}, the Markovian reduced model is unable to capture the oscillatory behavior of the output of the full model. In contrast, the non-Markovian reduced models capture the oscillatory output behavior even if only 40\% of the state components are observed and provide a close approximation as this rate is increased to 80\% of the state components. In the training data, each re-projection step is succeeded by at most $\Treproj = 100$ time steps with the full model. This corresponds to $t = 0.001$ and is plotted as a dashed vertical line in each panel of Figure \ref{fig:Chafee_Output_Detailed}. This training time length is not long enough to cover the oscillatory nature of the output. Yet, the resulting approximation with the non-Markovian reduced model provides a reasonable approximation far outside of the training regime, which emphasizes that the dynamics of the underlying system are learned rather than mere interpolations between training data samples.

\begin{figure}
  \begin{subfigure}[b]{0.45\textwidth}
    \begin{center}
{{\Large\resizebox{1.15\columnwidth}{!}{\input{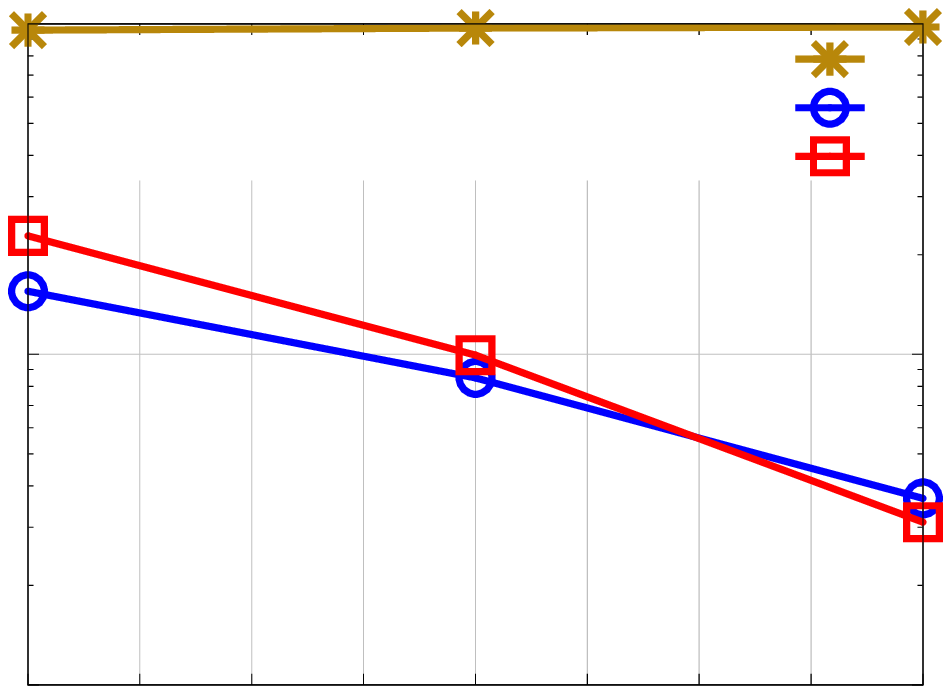}}}}
\end{center}
    \caption{Stage vs batch error, $n=8$}
  \end{subfigure}
  \quad \quad \quad
  \begin{subfigure}[b]{0.45\textwidth}
    \begin{center}
{{\Large\resizebox{1.15\columnwidth}{!}{\input{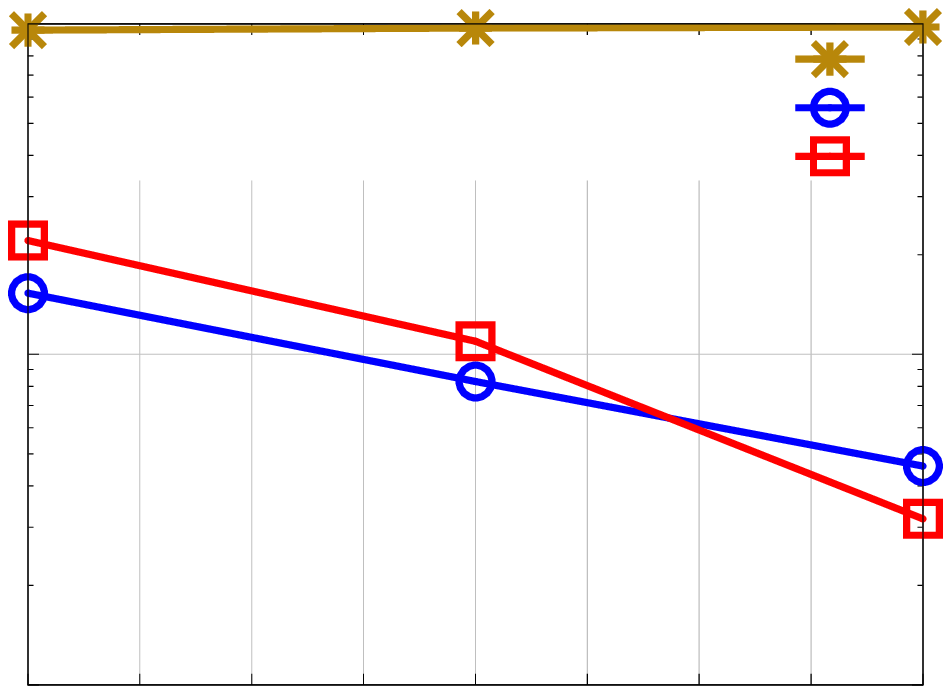}}}}
\end{center}
    \caption{Stage vs batch error, $n=10$}
  \end{subfigure}
  \caption{Chafee-Infante equation (Section~\ref{subsec:Chafee}). The non-Markovian reduced models achieve an error of more than one order of magnitude lower than its Markovian counterpart.}
  \label{fig:Chafee_Output_Condensed}
\end{figure}

\begin{figure}
  \begin{subfigure}[b]{0.45\textwidth}
    \begin{center}
{{\Large\resizebox{1.15\columnwidth}{!}{\input{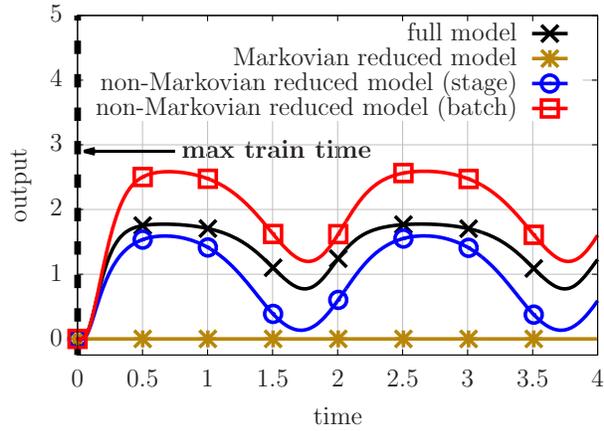}}}}
\end{center}
    \caption{40\% observed state components}
    % \label{fig}
  \end{subfigure}
  \hspace{3em}
  \begin{subfigure}[b]{0.45\textwidth}
   \begin{center}
{{\Large\resizebox{1.15\columnwidth}{!}{\input{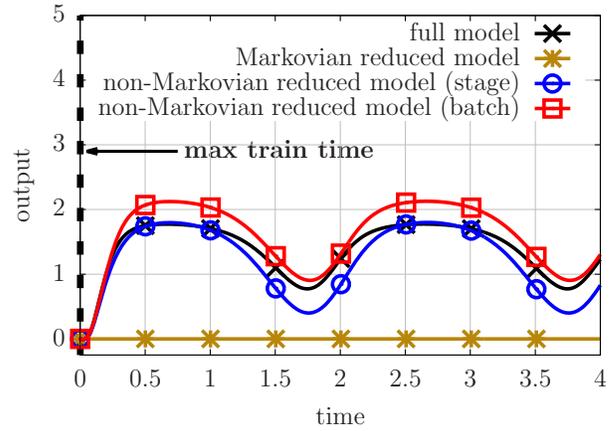}}}}
\end{center}
    \caption{60\% observed state components}
    % \label{fig}
  \end{subfigure}
\begin{center}
  \begin{subfigure}[b]{0.45\textwidth}
{{\Large\resizebox{1.15\columnwidth}{!}{\input{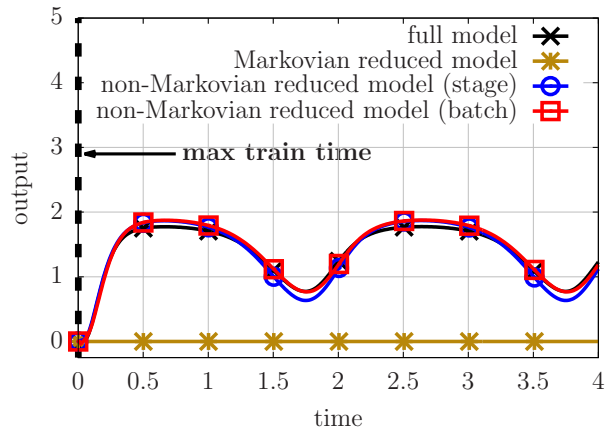}}}}
    \caption{80\% observed state components}
    % \label{fig}
  \end{subfigure}
\end{center}
  \caption{Chafee-Infante equation (Section~\ref{subsec:Chafee}). The non-Markovian reduced model for the output captures the oscillatory nature of the full model output even though the models are learned from much shorter training trajectories than the prediction end time. Thus, the learned non-Markovian model provide reasonable approximations far beyond the training regime in this example.}
  \label{fig:Chafee_Output_Detailed}
\end{figure}

\section*{Acknowledgments}
This work was partially supported by US Department of Energy, Office of Advanced Scientific Computing Research, Applied Mathematics Program (Program Manager Dr. Steven Lee), DOE Award DESC0019334, and by the National Science Foundation under Grant No.~1901091 and under Grant No.~1761068.

\bibliographystyle{abbrv}
\bibliography{references.bib}

\begin{appendices}
\section{Error analysis of the non-Markovian reduced model} \label{appendix}

We build on the analysis in Section~\ref{subsubsec:LinearProposedClosure} and demonstrate numerically that  Markovian reduced models for linear autonomous full systems can achieve lower errors than the proposed non-Markovian reduced models \eqref{eq:ClosureLinear}. To motivate the numerical experiments that follow, consider a full model with dimension $N = 2$ and with observed state of dimension $r = 1$ and reduced dimension $n=1$. A sufficient condition for which the proposed reduced model with non-Markovian term \eqref{eq:ClosureLinear} yields a lower error than a Markovian reduced model is when $\bfA_1$ is symmetric positive definite. To see this, observe that  for positive integers $l$ ($l \in \mathbb{Z}^+$), $\tbfA_1, \bfE_l \in \R$ and that $\tbfA_1 >0$, $$\bfE_l = (\bfQ^T \bfA_1 \Qperp)^2 ((\Qperp)^T \bfA_1 \Qperp)^{l-1} > 0.$$ Provided that $\tbfz_0 = \tbfz_0^{(0)}= \tbfz_0^{(L)}$, for fixed $k \in \mathbb{Z}^+$, if $\tbfz_k^{(0)}$ and $\tbfz_k^{(L)}$ are expressed in terms of the initial condition $\tbfz_0$, algebraic calculations show that 
$$\|\tbfz_k - \tbfz_k^{(L)}\|_2 \le \|\tbfz_k - \tbfz_k^{(0)}\|_2$$
since $\tbfA_1, \bfE_l$ are positive for all $l \in \mathbb{Z}^+.$ Therefore, since 
\begin{align*}
        \|\bfz_k - \bfV \tbfz_k^{(0)}\|_2 & = \|\bfz_k - \bfV \tbfz_k\|_2 + \|\bfV(\tbfz_k - \tbfz_k^{(0)})\|_2, \\
        \|\bfz_k - \bfV \tbfz_k^{(L)}\|_2 & = \|\bfz_k - \bfV \tbfz_k\|_2 + \|\bfV(\tbfz_k - \tbfz_k^{(L)})\|_2,
\end{align*}
we conclude that $$\|\bfz_k - \bfV \tbfz_k^{(L)}\|_2 \le \|\bfz_k - \bfV \tbfz_k^{(0)}\|_2,$$
i.e., the reduced model with non-Markovian term achieves a lower error than its Markovian counterpart.

However, the symmetric positive definiteness of the matrix $\bfA_1$ is insufficient when $N > 2, n > 1$. To see this, consider the following two examples with lag  $L=1$. A numerical implementation is available in Python\footnote{\href{https://github.com/wayneisaacuy/OpInfPartialObs}{https://github.com/wayneisaacuy/OpInfPartialObs}} which reproduces Figure~\ref{fig:Appendix} below. We set $N = 10, n=2$ and consider 30\% observed state components for the first example while for the second, we use $N = 50, n = 40$ and consider 95\% observed state components. In both cases, the initial condition $\tbfz_0$ is chosen such that its components are realizations of independent standard normal random variables. The initial condition for the full system is then $\bfx_0 = \bfQ \tbfz_0$ so that $\bfx_0$ satisfies $(\Qperp)^T \bfx_0 = \boldsymbol{0}_{N-n}$. 

The symmetric positive definite matrix $\bfA_1$ is constructed as follows. Its eigenvalues are sampled from a uniform distribution on $(0,1)$ to ensure that the system is stable. Its orthonormal eigenvectors are then chosen to be the eigenvectors of $(\bfR + \bfR^T)/2$ where $\bfR^{N \times N}$ is a matrix whose entries are independently sampled from a uniform distribution on $(0,10)$. The components with indices 1,6,10 of the full state are observed in the first example with the initial condition and basis and system matrices given by
    \begin{align*}
        \bfx_0 & = \begin{bmatrix} -0.5960 & 0 & 0 & 0 & 0 & 1.0333 &  0 & 0 & 0 & 0.8346
    \end{bmatrix}^T, \\
        \bfV & = \begin{bmatrix} 
                -0.9889 & 0.0294\\  
                0.0767 & -0.7374\\  
                -0.1269 & -0.6748
                \end{bmatrix},\\
        \basisPerp & = \begin{bmatrix} -0.1453\\ 
                                        -0.6710\\ 
                                        0.7270\\
                        \end{bmatrix},\\
        \bfA_1 & = 
    \begin{bmatrix} 0.3603 & 0.0184 & -0.2192 & 0.0435 & -0.1624 & -0.0602 & 0.0758 & -0.0872 & 0.0634 & -0.0252\\  0.0184 & 0.2907 & -0.1049 & 0.1334 & 0.0087 & 0.0951 & -0.0594 & -0.0602 & -0.0717 &  0.1366\\  -0.2192 & -0.1049 & 0.2978 & -0.1695 & 0.0887 & 0.0648 & -0.0924 & 0.0624 & -0.0213 &  0.0079\\  0.0435 & 0.1334 & -0.1695 & 0.3700 & 0.0529 & -0.0074 & 0.1284 & 0.0196 & -0.0115 & 0.0273\\  -0.1624 & 0.0087 & 0.0887 & 0.0529 & 0.4582 & 0.0913 & 0.1194 & -0.0375 & 0.0449 & 0.1615 \\  -0.0602 & 0.0951 & 0.0648 & -0.0074 & 0.0913 & 0.4311 & -0.0781 & -0.0263 & 0.2070 &  0.1714\\ 0.0758 & -0.0594 & -0.0924 & 0.1284 & 0.1194 & -0.0781 & 0.3804 & 0.0296 & 0.1548 &  -0.1197\\  -0.0872 & -0.0602 & 0.0624 & 0.0196 & -0.0375 & -0.0263 & 0.0296 & 0.3470 & 0.1123 & -0.1761\\  0.0634 & -0.0717 & -0.0213 & -0.0115 & 0.0449 & 0.2070 & 0.1548 & 0.1123 & 0.5707 &  -0.1059\\  -0.0252 & 0.1366 & 0.0079 & 0.0273 & 0.1615 & 0.1714 & -0.1197 & -0.1761 & -0.1059 & 0.3255
    \end{bmatrix}.                
    \end{align*}
The details of the second example are provided in the repository\footnotemark[1].

Figure~\ref{fig:Appendix} shows the difference in the relative error
$$\frac{1}{\|\bfz_k\|_2} (\|\bfz_k - \bfV \tbfz_k^{(0)}\|_2 - \|\bfz_k - \bfV \tbfz_k^{(L)}\|_2)$$
against the time step $k$. At certain time instances, the Markovian reduced model has a smaller error (negative values on the $y$-axis) than the model with non-Markovian term of lag $L=1$. Thus, the conclusion we derived for $N = 2,n=1$ does not generalize and these examples show that it is possible that the Markovian model gives a more accurate approximation than the truncated non-Markovian model even if the matrix $\bfA_1$ is symmetric positive definite. A more rigorous analysis is warranted but is beyond the scope of this work.

\end{appendices}

\begin{figure}
  \begin{subfigure}[b]{0.45\textwidth}
    \begin{center}
{{\Large\resizebox{1.15\columnwidth}{!}{\input{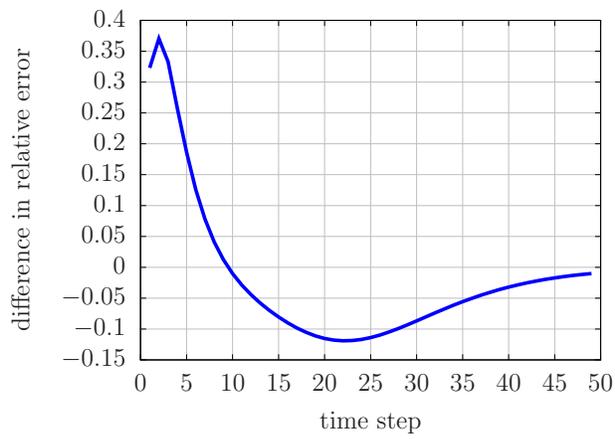}}}}
\end{center}
    \caption{$N = 10,n=2$, 30\% observed state components}
  \end{subfigure}
  \quad \quad \quad
  \begin{subfigure}[b]{0.45\textwidth}
    \begin{center}
{{\Large\resizebox{1.15\columnwidth}{!}{\input{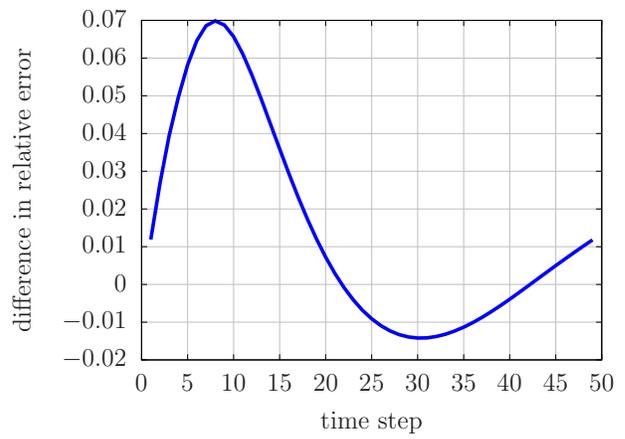}}}}
\end{center}
    \caption{$N=50,n=40$, 95\% observed state components}
  \end{subfigure}
  \caption{The Markovian model yields a more accurate approximation of the observed state dynamics at certain time points than the model with truncated non-Markovian term in this example.}
  \label{fig:Appendix}
\end{figure}

\end{document}